\documentclass[10pt]{article} 
\usepackage[accepted]{tmlr}

\usepackage{amsmath,amsfonts,bm}

\def\eqref#1{equation~\ref{#1}}

\def\1{\bm{1}}

\DeclareMathAlphabet{\mathsfit}{\encodingdefault}{\sfdefault}{m}{sl}
\SetMathAlphabet{\mathsfit}{bold}{\encodingdefault}{\sfdefault}{bx}{n}

\DeclareMathOperator*{\argmax}{arg\,max}

\providecommand{\IfDocumentMetadataT}[1]{}
\usepackage{hyperref}
\usepackage{url}

\usepackage[utf8]{inputenc}
\usepackage[T1]{fontenc}

\usepackage{amsmath}
\usepackage{amssymb}
\usepackage{mathtools}
\usepackage{amsthm}
\usepackage{amsfonts}
\usepackage{mathrsfs}
\usepackage{nicefrac}

\usepackage{booktabs}
\usepackage{subcaption}

\usepackage{algorithm}
\usepackage{algpseudocode}

\usepackage{microtype}
\usepackage{xcolor}

\usepackage[capitalize,noabbrev]{cleveref}

\theoremstyle{plain}
\newtheorem{theorem}{Theorem}[section]
\newtheorem{proposition}[theorem]{Proposition}
\newtheorem{lemma}[theorem]{Lemma}
\newtheorem{corollary}[theorem]{Corollary}
\theoremstyle{definition}

\newtheorem{assumption}[theorem]{Assumption}
\theoremstyle{remark}
\newtheorem{remark}[theorem]{Remark}

\DeclareMathOperator{\maximize}{maximize}
\DeclareMathOperator{\minimize}{minimize}
\DeclareMathOperator{\subjectto}{subject\ to}
\DeclareMathOperator{\sto}{s.t.}

\title{Scalable Constrained Multi-Agent Reinforcement Learning via State Augmentation and Consensus for Separable Dynamics}

\author{\name Santiago Amaya-Corredor \email santiagoesteban.amaya@upf.edu \\
      \addr Department of Engineering\\
      University Pompeu Fabra
      \AND
      \name Miguel Calvo-Fullana \email miguel.calvo@upf.edu \\
      \addr Department of Engineering\\
      University Pompeu Fabra
      \AND
      \name Anders Jonsson \email anders.jonsson@upf.edu\\
      \addr Department of Engineering\\
      University Pompeu Fabra}

\def\month{06}
\def\year{2026}
\def\openreview{\url{https://openreview.net/forum?id=whihxstZcO}}

\begin{document}

\maketitle

\begin{abstract}
We present a distributed approach for constrained Multi-Agent Reinforcement
Learning (MARL) that combines state-augmented policy learning with distributed
consensus over dual variables. Our method targets systems where agents have
separable dynamics but must coordinate to satisfy global resource
constraints, a setting in which, as we demonstrate empirically, independent
learning fails to produce feasible solutions because agents cannot determine
appropriate individual contributions toward collective constraint
satisfaction.
The key technical contribution is showing that lightweight
neighbor-to-neighbor consensus over Lagrange multipliers suffices for globally
coordinated constraint enforcement while preserving the scalability of
independent training. Each agent learns a single augmented policy offline,
conditioned on both its local state and a dual variable encoding constraint
feedback. During execution, agents reach agreement on this dual variable
through local communication alone. We prove that under mild connectivity
assumptions, the consensus error among agents' multipliers is bounded, and
show that this translates to a bounded constraint violation that decreases
with graph connectivity and the number of consensus rounds.
Unlike centralized training with decentralized execution (CTDE)
approaches, whose complexity grows at least quadratically with agent count,
our method scales linearly in both training and execution. Experiments on
smart grid demand response demonstrate that consensus coordination
is \emph{essential for feasibility}: without it, agents satisfy grid capacity
constraints only by indefinitely postponing demand, a degenerate non-solution.
With consensus, agents converge to a shared dual variable and satisfy both grid
constraints and demand fulfillment, scaling to thousands of agents while CTDE
baselines are limited to dozens.
\end{abstract}

\section{Introduction}
\label{sec:intro}

In recent years, reinforcement learning (RL) has achieved significant success in solving diverse and complex decision-making tasks \citep{poka, robots, alphago}. Many of these successes involve multiple agents and can be characterized as multi-agent RL (MARL). Generally, MARL addresses a sequential problem where a set of autonomous agents make decisions and interact in a shared environment to maximize a reward. However, MARL problems can quickly become intractable as the number of agents increases, since the number of possible interactions and the space of possible states can grow exponentially in the number of agents. Moreover, as all agents navigate and learn simultaneously, the environment may become non-stationary, invalidating many of the single-agent RL assumptions. In realistic scenarios, conflicting objectives often need to be balanced to achieve satisfactory solutions. This issue is exacerbated when increasing the number of autonomous agents, whose specific goals are not commonly aligned. Finding optimal strategies in multi-agent systems (MAS) usually require at least some level of coordination and communication.

Our work addresses distributed systems where agents have separable dynamics but must coordinate to satisfy global operational constraints. While this assumption is restrictive compared to general MARL with coupled dynamics, it enables linear scaling in both training and execution, making our approach practical for hundreds or thousands of agents. The setting remains genuinely multi-agent as agents must coordinate through consensus to satisfy global constraints. This structure naturally arises in infrastructure management (e.g., building thermostats, EV chargers) where local controllers make independent decisions but must respect system-wide limits (e.g., power grid capacity). When agents share the same MDP, we only need to train one policy for all agents, significantly reducing complexity. The multi-agent coordination occurs through consensus on a dual variable during execution. Our Constrained MARL (CMARL) framework has each agent maximize a primary reward while adhering to a global average constraint on a secondary reward, with the constraint acting as the coupling mechanism.

Agents communicate only with immediate neighbors in the network, reflecting realistic constraints where global broadcast is infeasible. Through local communication, agents share dual variables to achieve consensus dynamically. Crucially, we demonstrate empirically that this coordination is not merely beneficial but \emph{necessary}: without consensus, independently-trained agents cannot distinguish between feasible demand management and trivial constraint satisfaction through indefinite demand postponement. The dual variable consensus mechanism is what transforms independently-trained policies into a collectively feasible solution. We develop a novel CMARL algorithm and validate it on smart grid management \citep{smartgrid}, optimizing energy distribution while satisfying operational constraints. Our experiments demonstrate scalability across different network configurations with varying complexity and agent heterogeneity. The contributions of this paper are threefold:

\begin{enumerate}
    \item \textbf{A distributed consensus mechanism for constrained MARL with
    separable dynamics.} We integrate distributed consensus over Lagrange
    multipliers with state-augmented constrained RL. We prove bounded
    consensus error under mild connectivity assumptions and show that this
    implies bounded constraint violation through a Lipschitz sensitivity
    argument. This enables global constraint coordination through local
    communication alone, without centralized critics or joint state
    observations.

    \item \textbf{Linear scalability in both training and execution.} By
    exploiting problem separability for policy learning and lightweight
    consensus for coordination, our method scales to 1000 agents while CTDE
    approaches are limited to tens of agents due to centralized critic
    requirements.

    \item \textbf{Empirical evidence that consensus coordination is necessary
    for feasibility.} On smart grid economic dispatch, we show that the same
    independently-trained policies produce fundamentally different
    outcomes, feasible versus infeasible, depending solely on whether
    consensus is enabled. We further validate the approach against a
    centralized oracle, demonstrating that distributed consensus achieves
    equivalent constraint satisfaction and cost ($<\!0.2\%$ gap).
\end{enumerate}

\section{Related Work}
\label{sec:related}

\textbf{Constrained Reinforcement Learning.}
Constrained RL formulates objectives with side constraints, typically solved
via Lagrangian methods~\citep{Altman2021-mh, borkarCMDP}. We distinguish two
paradigms: \emph{safe RL} ensures per-step constraint
satisfaction~\citep{achiam2017constrained, Chow2019LyapunovbasedSP,
Achiam2019BenchmarkingSE}, while \emph{average-constrained RL} permits
temporary violations if long-term averages stay within
bounds~\citep{Liang2018AcceleratedPP, paternain2022safe}. The latter suits
resource management where momentary spikes are acceptable.
Standard Lagrangian methods can fail to produce feasible policies when the
relationship between dual variables and constraint satisfaction is complex.
State augmentation~\citep{StateAug} addresses this by incorporating the dual
variable into the state representation, enabling policies that adapt their
behavior to the current level of constraint pressure. Our work extends state
augmentation from the single-agent to the multi-agent setting through
distributed consensus on the shared dual variable; concurrent
multi-agent state-augmentation work~\citep{agorio2024multi} addresses
assignment problems.

\textbf{Cooperative MARL.}
Centralized training with decentralized execution (CTDE) has emerged as the
dominant paradigm for cooperative MARL~\citep{KRAEMER201682}, encompassing
value decomposition methods~\citep{valuedec, QMIX, qplex2020} and
policy gradient approaches with centralized
critics~\citep{MADDPG, yu2022surprising}, though both scale poorly with agent
count. Recent work has questioned whether CTDE provides benefits over
independent learning in many settings~\citep{de2020independent}, though
independent methods cannot handle global constraints without additional
mechanisms. For comprehensive coverage, we refer to recent
surveys~\citep{kungurtsev2025survey, low2025cooperative, hady2025marl} and the
textbook by~\citet{albrecht2024marl}. Our approach differs fundamentally:
rather than centralizing during training, we train policies independently and
coordinate only through dual variable consensus during execution.

\textbf{Networked and Scalable MARL.}
Several works achieve scalability by exploiting network structure.
\citet{zhang2018networked} develop fully decentralized MARL for networked
agents, establishing convergence for policy gradient methods with local
communication. \citet{qu2020scalable} propose scalable multi-agent RL for
networked systems with average reward objectives, achieving linear scaling
through separable dynamics but without handling constraints.
\citet{chu2020multi} develop a multi-agent RL framework for networked system
control with localized rewards. In the power systems domain,
\citet{chen2021powernet} develop PowerNet for scalable grid control,
\citet{feng2023stability} address stability-constrained RL for decentralized
voltage control, and \citet{mai2024multi} study multi-agent RL for
fast-timescale demand response. More recently, \citet{li2025distributed}
propose distributed MARL for multi-objective microgrid dispatch using
actor-critic with multiple critics, achieving coordination through
neighbor-to-neighbor communication; however, their method targets Pareto
optimization without explicit global constraint handling. While these works
demonstrate scalability through network structure, none provides mechanisms
for satisfying global constraints that couple agents' decisions.

\textbf{Constrained Multi-Agent RL.}
Recent work addresses constraints in MARL with coupled dynamics.
\citet{lu2021decentralized} propose Safe Dec-PG for distributed constrained
MDPs, achieving convergence but with computational complexity limiting
experiments to 5 agents. \citet{gu2021macpo} introduce Multi-Agent Constrained
Policy Optimization (MACPO), extending TRPO-Lagrangian to multi-agent
settings. \citet{ying2023scalable} develop a primal-dual actor-critic method
using $\kappa$-hop truncation, trading off constraint coupling accuracy for
scalability. \citet{zhang2024scalable} introduce Scal-MAPPO-L with local
policy optimization, though they note exponential growth in state-action space
with neighborhood size (experiments limited to 12 agents). On the theoretical
front, \citet{chen2024hardness} establish hardness results showing constrained
cooperative MARL is computationally intractable in general settings, while
\citet{2023arXiv230600212D} provide a generalized Lagrangian primal-dual
scheme with provable sample-efficient convergence.
Importantly, these methods target \emph{safety constraints} requiring per-step
satisfaction under \emph{coupled dynamics} where agents' transitions directly
affect each other. While more general than our setting, this generality comes
at computational cost that limits scalability to tens of agents. We target
\emph{average constraint satisfaction} with \emph{separable dynamics}---assumptions satisfied in many infrastructure management problems---enabling
linear scaling demonstrated up to 1000 agents.

\textbf{Distributed Optimization and Consensus in RL.}
Our dual consensus mechanism builds on distributed optimization with coupled
constraints~\citep{nedic2009primaldual, Olfati2007}, with recent extensions to
RL settings~\citep{yarmoshik2024decentralized, wang2024globally}. Closely
related, \citet{oh2025consensus} develop consensus-based actor-critic for
network optimization, where agents share \emph{local rewards} with neighbors
to approximate the global return without centralized training.
Our consensus operates on a fundamentally different quantity: \emph{dual
variables} (Lagrange multipliers) rather than rewards. In reward-consensus
methods~\citep{oh2025consensus, li2025distributed}, each agent's feasibility
is independent, whereas our global constraint $\sum_i V_1^i(\pi^i) \leq c$
couples agents' feasible sets. This requires state-augmented policies
$\pi^i(s, \lambda)$ that respond to the evolving multiplier---a design absent
in reward-consensus methods. As we demonstrate in
Section~\ref{subsec:baselines}, this coordination is necessary for feasibility.

\textbf{Neural Controllers for Power Systems.}
\citet{cui2023neuralpi} propose structured neural-PI
controllers using strictly convex neural networks, achieving stability and
output tracking through equilibrium-independent passivity---a model-based
approach requiring knowledge of system dynamics. \citet{feng2023bridging}
construct decentralized RL-based controllers for voltage regulation, combining
a transient neural policy with a steady-state gradient flow optimizer. While
both address power systems with distributed controllers, they focus on
\emph{stability and tracking} rather than \emph{constraint satisfaction},
embedding constraints in the controller architecture rather than handling them
explicitly through Lagrangian duality. The approaches are complementary.

\textbf{Positioning Our Work.}
Table~\ref{tab:related_work_comparison} summarizes how our approach compares to
prior methods across three dimensions: constraint type, dynamics structure,
and scalability. Our key distinction is the combination of (i)~separable
dynamics enabling independent policy training,
(ii)~average rather than per-step constraint satisfaction, and
(iii)~distributed consensus over dual variables for coordination.
This combination enables linear scaling---demonstrated up to 1000
agents---while methods handling coupled
dynamics~\citep{lu2021decentralized, ying2023scalable, zhang2024scalable} are
limited to tens of agents, and methods achieving scalability through separable structure~\citep{qu2020scalable, chu2020multi} lack explicit
constraint handling. For the important class of infrastructure management
problems satisfying our assumptions, we achieve unprecedented scale with
bounded constraint violation guarantees, coordinated through lightweight
neighbor-to-neighbor communication.

\begin{table}[t]
\caption{Comparison of constrained and scalable MARL methods. Scalability
indicates computational complexity: exponential in agents ($n$), exponential in
communication radius ($\kappa$), or linear. Methods marked with $\dagger$
handle coupled dynamics but at higher computational cost.}
\label{tab:related_work_comparison}
\centering\small
\begin{tabular}{@{}lcccc@{}}
\toprule
\textbf{Method} & \textbf{Constraints} & \textbf{Dynamics} & \textbf{Scalability} & \textbf{Max Agents}\\
\midrule
\multicolumn{5}{l}{\emph{Scalable (separable/networked dynamics, no global constraints)}} \\
\citet{qu2020scalable} & None & Networked & Linear & 25 \\
\citet{chu2020multi} & None & Networked & Linear & 28 \\
\citet{chen2021powernet} & None & Networked & Linear & 40 \\
\citet{feng2023stability} & Stability (local) & Networked & Linear & 123 \\
\midrule
\multicolumn{5}{l}{\emph{Constrained (coupled dynamics, limited scalability)$^\dagger$}} \\
\citet{lu2021decentralized} & Safety (per-step) & Coupled & Exp.\ in $n$ & 5 \\
\citet{gu2021macpo} & Safety (per-step) & Coupled & Exp.\ in $n$ & 8 \\
\citet{ying2023scalable} & Safety (per-step) & Coupled & Exp.\ in $\kappa$ & 20 \\
\citet{zhang2024scalable} & Safety (per-step) & Coupled & Exp.\ in $\kappa$ & 12 \\
\midrule
\textbf{Ours} & Average (global) & Separable & Linear & \textbf{1000} \\
\bottomrule
\end{tabular}
\end{table}

\section{Problem Formulation}
\label{sec:problem}

Typically, CMARL is studied using the Markov Games framework~\citep{MarkovGames}, an extension of game theory to environments where the dynamics can be modeled using a Markov Decision Process (MDP). Markov games model interactions among multiple agents whose decisions influence a shared environment. In our distributed constrained setting, the Markov game is defined by the tuple $\langle N, \{S^i\}_{i=1}^N, \{A^i\}_{i=1}^N, \{P^i\}_{i=1}^N, \{r_0^i\}_{i=1}^N, \{r_1^i\}_{i=1}^N \rangle$, where $N$ is the number of agents, $S^i \subset \mathbb{R}^m$ and $A^i \subset \mathbb{R}^d$ are compact sets denoting the states and actions of agent $i$, with $S:=S^1 \times \cdots \times S^N$ and $A:=A^1 \times \cdots \times A^N$ denoting the sets of joint states and actions. The joint state transition probability is given by $P: S \times A \rightarrow \Delta(S)$, with each individual agent's transition given by $P^i: S^i \times A^i \rightarrow \Delta(S^i)$, where $\Delta(S)$ is the probability simplex on $S$. We further denote by $r_0^i: S^i \times A^i \rightarrow \mathbb{R}$ the reward function for the main objective of agent $i$, and by $r_1^i: S^i \times A^i \rightarrow \mathbb{R}$ the reward function for the secondary objective subject to a constraint, with global counterparts defined as $r_0: S \times A \rightarrow \mathbb{R}$ and $r_1: S \times A \rightarrow \mathbb{R}$. At time $t$, given a joint state $s_t=(s^1_t,\ldots,s^N_t)$ and action $a_t=(a^1_t,\ldots,a^N_t)$, the system transitions to a new state $s_{t+1}=(s^1_{t+1},\ldots,s^N_{t+1})$ with probability $P(s_{t+1}|s_t,a_t)$. The Markov property ensures that the system dynamics only depend on the last state and action, i.e.~$P(s_{t+1}|s_0,a_0,\ldots,s_t,a_t)=P(s_{t+1}|s_t,a_t)$. We consider a scenario where agents maximize a primary reward $r_0$ subject to a global resource constraint on a secondary quantity $r_1$. Specifically, we aim to maximize the long-term average of $r_0(s_t, a_t)$, while ensuring that the long-term average of $r_1(s_t, a_t)$ does not exceed a given upper bound $c$.\footnote{For simplicity, we restrict ourselves to the single constraint case, though the results generalize to multiple constraints.} This constrained optimization problem can be expressed as
\begin{subequations}
\label{eq:CMARL_centralized}
\begin{align}
    \underset{\pi}{\maximize}  \enskip   & \lim_{T \rightarrow \infty} \frac{1}{T} \mathbb{E}_{s, a \sim \pi} \left[\sum_{t=0}^Tr_0(s_t,a_t)\right]\\
    \subjectto \enskip   &  \lim_{T \rightarrow \infty} \frac{1}{T} \mathbb{E}_{s, a \sim \pi} \left[\sum_{t=0}^Tr_1(s_t,a_t)\right] \leq c.
\end{align}
\end{subequations}
This is a multi-agent centralized problem, which is often impractical due to its poor scalability. Specifically, we are interested in problems that can be decomposed into distributed problems. Formally, we consider scenarios satisfying the following assumptions.
\begin{assumption}[Independent Policies]\label{ass:local}
Each agent $i$ selects an action taking into account only its own local state. Namely, $\pi(a_t|s_t)=\prod_{n=1}^{N} \pi^n(a^n_t|s^n_t)$.
\end{assumption} 
\begin{assumption}[Separable Dynamics]\label{ass:independence}
The actions of one agent do not affect the states of others. That is, state transitions are given by $P(s_{t+1}|s_t,a_t)=\prod_{i=1}^N P^i(s^i_{t+1}|s^i_t,a^i_t)$. 
\end{assumption}
\begin{assumption}[Summable Rewards]\label{ass:rewards}
The global reward can be decomposed as the sum of individual rewards, i.e.\ $r_0(s_t,a_t)=\sum_{n=1}^N r_0^n(s^i_t,a^i_t)$ and $r_1(s_t,a_t)=\sum_{n=1}^N r_1^n(s^i_t,a^i_t)$.
\end{assumption}

\begin{remark}[Scope and Limitations]
Assumptions 3.1-3.3 significantly restrict the class of problems we address. Under these assumptions, agents do not influence each other's states or rewards directly, which excludes many classical MARL scenarios like multi-robot coordination or competitive games. However, these assumptions are satisfied in important real-world domains:
\begin{itemize}
    \item \textbf{Smart Grid Management}: Buildings independently control their energy consumption but share grid capacity constraints.
    \item \textbf{Distributed Computing}: Processes independently execute but share memory/bandwidth limits.
    \item \textbf{Fleet Management}: Vehicles independently plan routes but share depot capacity or charging infrastructure constraints.
\end{itemize}
For problems where agents' dynamics are coupled---such as traffic congestion, where one vehicle's actions affect others' travel times---methods like those of \citet{lu2021decentralized} and \citet{zhang2024scalable} are more appropriate, albeit at higher computational cost. Our contribution is demonstrating that when separable structure holds, it can be exploited for orders-of-magnitude improvements in scalability.
\end{remark}
The first assumption allows each agent to operate based solely on local information, the second assumption ensures that the interactions of the agents are structured in a non-interfering manner, and the third assumption ensures that global objectives can be achieved through local decisions. This set of assumptions allows for the problem to be rewritten in the following form:
\begin{subequations}
\label{eq:CMARL}
\begin{align}
    \underset{\pi^1,\ldots,\pi^N}{\max}   & \sum_{i=1}^N \lim_{T \rightarrow \infty} \frac{1}{T} \mathbb{E}_{s^i, a^i \sim \pi^i} \left[\sum_{t=0}^Tr_0^i(s^i_t,a^i_t)\right]\\
    \sto    & \sum_{i=1}^N \lim_{T \rightarrow \infty} \frac{1}{T} \mathbb{E}_{s^i, a^i \sim \pi^i} \left[\sum_{t=0}^Tr_1^i(s^i_t,a^i_t)\right] \leq c. \hspace{-1ex} \hspace{-1ex}
\end{align}
\end{subequations}
By defining value functions as the long-term average of each reward,
\begin{align}
\label{eq:Val}
    V_j^i(\pi^i) \triangleq
 \lim_{T \rightarrow \infty} \frac{1}{T} \mathbb{E}_{s^i, a^i \sim \pi^i} \left[\sum_{t=0}^Tr_j^i(s_t^i,a_t^i)\right],
\end{align}
we can then rewrite the maximization problem in \eqref{eq:CMARL} in the following more concise manner:
\begin{align}
\label{eq:CMARL_cons}
    \underset{\pi^1,\ldots,\pi^N}{\maximize} \enskip  \sum_{i=1}^N V_0^i(\pi^i)  \enskip \subjectto \enskip  \sum_{i=1}^N V_1^i(\pi^i) \leq c.
\end{align}
The resulting formulation now exhibits a certain degree of separability across agents, with each agent maximizing its own policy with respect to its individual value function, while still being coupled to the other agents through the global constraint. While the separable structure might suggest independent single-agent solutions would suffice, the global constraint in \eqref{eq:CMARL_cons} creates a critical coordination challenge: without communication, agents cannot determine appropriate individual contributions to satisfy the collective constraint. This necessitates our consensus mechanism to coordinate the dual variables that encode constraint violation feedback.

\section{Methodology}
\label{sec:method}

We begin by formulating the Lagrangian of the optimization problem in \eqref{eq:CMARL_cons}. This involves introducing Lagrange multipliers to transform the constrained optimization problem into a form where the constraints are incorporated into the objective function as penalty terms. Namely,
\begin{align}
    \mathcal{L}(\pi,\lambda) &= \sum_{i=1}^N V_0^i(\pi^i) +\lambda \Biggl(c - \sum_{i=1}^N V_1^i(\pi^i)\Biggr)
    = \sum_{i=1}^{N}  \Biggl(V_0^i(\pi^i) + \lambda \left(\frac{c}{N} - V_1^i(\pi^i) \right) \Biggr),
    \label{eq:lagrangian}
\end{align}
where $\lambda \in \mathbb{R}^+$ is the Lagrange multiplier (dual variable) for the inequality constraint. We rewrite the Lagrangian as individual agent components to maintain distributed formulation. The dual problem becomes  
\begin{align}
\label{eq:DualProb}
\underset{\lambda}{\minimize} \Bigg[\sum_{i=1}^{N} \max_{\pi^i}  \Biggl(V_0^i(\pi^i) +\lambda \left(\frac{c}{N} - V_1^i(\pi^i)\right)\Biggr)\Bigg]\hspace{-1ex}
\end{align}

where summation and maximization are exchanged due to Assumptions \ref{ass:local} and \ref{ass:independence}. This decomposition enables independent local optimization while satisfying the global constraint. The problem exhibits strong duality \citep{ZeroDualGap}, so the optimal solution of \eqref{eq:CMARL_cons} equals the saddle-point of \eqref{eq:DualProb}. 

\begin{remark}[Comparison with Standard Primal-Dual Methods]
Standard distributed primal-dual methods \citep{yarmoshik2024decentralized, wang2024globally} require strongly convex objectives or extensive message passing. Our approach differs by: (i) using state augmentation from single-agent CRL \citep{StateAug} for non-convex policy optimization, and (ii) requiring only single-scalar neighbor communication. Integrating standard consensus \citep{Boyd} with state-augmented RL policies enables our scalability.
\end{remark}

\subsection{Offline independent training}
\label{sec:training}

The primal step of \eqref{eq:DualProb} (policy learning) is distributable. For a given $\lambda$, the problem decomposes across agents. Defining the weighted reward $r^i_\lambda (s_t^i,a_t^i) \triangleq r_0^i(s_t^i,a_t^i) - \lambda r_1^i(s_t^i,a_t^i)$, the maximization becomes  
\begin{align}
\label{eq:polici_aug_reward}
\{\pi^i_\star(\lambda)\} = \argmax_{\pi^1,\ldots,\pi^N} \sum_{i=1}^N \lim_{T \rightarrow \infty} \frac{1}{T} \mathbb{E}_{s^i, a^i \sim \pi^i} \left[\sum_{t=0}^Tr_\lambda^i(s_t^i,a_t^i)\right].
\end{align}
Each agent's primal step follows standard unconstrained RL. However, standard dual methods can fail to produce feasible policies for CRL \citep{StateAug}. We thus learn state-augmented policies $\pi^i(\lambda)$ in augmented space $S^i \times \mathbb{R}_+$ that maximize \eqref{eq:lagrangian}, instead of ordinary policies in $S^i$. Agents independently train these policies using any standard RL method. The trained policies handle any constraint level $c$ when coupled with our dual update mechanism.

\subsection{Online dual consensus}

Determining $\lambda$ remains challenging since gradient descent on \eqref{eq:lagrangian} couples all agents. Consider agents communicating over an undirected graph $G = (V, E)$, where $V$ are vertices (agents) and $E \subset N \times N$ are edges. The neighborhood $\mathcal{N}^i = \{ j \in V \mid (i, j) \in E \}$ contains nodes directly connected to $i$. We rewrite \eqref{eq:DualProb} in distributed dual consensus form:
\begin{subequations}
\label{eq:decoupling}
\begin{align}
    &\hspace{0.5ex} \underset{\lambda^1,\ldots,\lambda^N}{\minimize} \sum_{i=1}^{N} \max_{\pi^i}  \Biggl(V_0^i(\pi^i) +\lambda^i \left(\frac{c}{N} - V_1^i(\pi^i)\right)\Biggr)\\
    &\subjectto \enskip    \lambda^i = \frac{1}{|\mathcal{N}_i|}\sum_{n \in \mathcal{N}^i}\lambda^n, \quad i=1,\ldots,N.
\end{align}
\end{subequations}
The solution to \eqref{eq:decoupling} equals that of \eqref{eq:DualProb}. Each agent holds local copy $\lambda^i$, with neighborhood constraints ensuring consensus. Using optimal policies from \eqref{eq:polici_aug_reward}, we obtain
\begin{subequations}
\label{eq:decoupling2}
\begin{align}
    \underset{\lambda^1,\ldots,\lambda^N}{\minimize} \enskip & \sum_{i=1}^{N} \Bigl[V_0^i\bigl(\pi^i_\star(\lambda^i)\bigr) +\lambda^i \left(
    \frac{c}{N} - V_1^i\bigl(\pi^i_\star(\lambda^i)\bigr)\right)
    \Bigr]\\
    \subjectto \enskip   & \lambda^i = \frac{1}{|\mathcal{N}_i|}\sum_{n \in \mathcal{N}^i}\lambda^n, \quad i=1,\ldots,N.
\end{align}
\end{subequations}

\subsection{Primal-consensus update}
To solve \eqref{eq:decoupling2}, each agent \(i\) maintains \(\lambda^i\) and iteratively (i) performs local gradient updates for constraint satisfaction and (ii) averages with neighbors' variables. With gradient step size \(\alpha>0\) and consensus step size \(\epsilon>0\), agent \(i\) updates:

\begin{equation}
\label{eq:primal_consensus_update}
\lambda_{k+1}^i
\;=\;
\lambda_k^i
\;-\;
\alpha \,\nabla_{\lambda^i}\!
\Bigl[
  V_0^i\bigl(\pi^i_\star(\lambda_k^i)\bigr)
  \;+\;
  \lambda_k^i\Bigl(
    \frac{c}{N}
    \;-\;
    V_1^i\bigl(\pi^i_\star(\lambda_k^i)\bigr)
  \Bigr)
\Bigr]
\;-\;
\epsilon\,\Bigl(\lambda_k^i - \overline{\lambda}_k^i\Bigr),
\end{equation}

where $\overline{\lambda}_k^i = \sum_{n \in \mathcal{N}_i}\lambda_k^n/|\mathcal{N}_i|$ is the neighbor average. The first term performs local gradient descent; the second enforces consensus. These corrections drive all $\lambda^i$ to converge, matching the solution of \eqref{eq:DualProb}.

\begin{remark}[Relation to Centralized Dual]
As agents optimize local $\lambda^i$ while enforcing neighbor consensus, $\{\lambda^i\}$ converges to the same value as the global $\lambda$ in \eqref{eq:DualProb}. Thus, \eqref{eq:primal_consensus_update} provides a fully distributed solution without centralized coordination.  
\end{remark}

\section{Algorithm}
\label{sec:Alg}

Each agent $i$ optimizes its local policy by maximizing the Lagrangian given its current copy of the dual variable $\lambda^i$. To ensure that the policy appropriately accounts for constraint satisfaction, we augment each agent's state space with the local multiplier $\lambda^i$. This augmentation yields a policy $\pi^{i}_\star(s_t^i,\lambda_t^i)$ that views $\lambda^i$ as part of the state, so that standard reinforcement-learning (RL) algorithms can be used to learn this policy.\footnote{%
  In practice, many RL methods—e.g., policy gradient, value-based methods—can be adapted to handle such an augmented state.%
}

If we have an optimal policy for a given set of multipliers $\pi_{\star}(s, \lambda)$, and we \textit{continuously update} these multipliers (via \ref{eq:primal_consensus_update}), then the state-action trajectories generated by each agent satisfy the constraints in \eqref{eq:CMARL}~\citep[Theorem 1]{StateAug}. Combining these ideas, we summarize the execution in Algorithm~\ref{alg:Alg1}. 

\begin{theorem}
\label{thm:consensus_bound}
Suppose the local value functions satisfy 
\begin{equation}
  \Bigl\|
    V_1^i\bigl(\pi^i_\star(\lambda^i_k)\bigr) 
    \;-\; \frac{1}{N}\sum_{j=1}^N V_1^j\bigl(\pi^j_\star(\lambda^j_k)\bigr)
  \Bigr\|
  \;\leq\;
  \sigma,
\end{equation}
and let $w^i = |\mathcal{N}^i|/\sum_{j=1}^N |\mathcal{N}^j|$.  Under mild conditions on the connectivity and step sizes, the execution of Algorithm~\ref{alg:Alg1} results in a bounded consensus error: 
\begin{equation}
\label{eq:thm_consensus}
\lim_{k \to \infty}
\Bigl\|
  \lambda_{k+1}
  \;-\;
  \sum_{i=1}^N w^i \,\lambda^i_k
\Bigr\|
\;\le\;
\frac{\rho^\mathscr{L}}{1 - \rho^\mathscr{L}}\;\alpha\,\sigma,
\end{equation}

where $\rho$ and $\mathscr{L}$ relate to the graph's spectral properties and the number of communication steps per iteration (or partial consensus steps).
\end{theorem}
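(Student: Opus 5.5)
We would treat the stacked dual iterate $\lambda_k = (\lambda_k^1,\ldots,\lambda_k^N)^\top$ as a perturbed consensus process and analyse its disagreement. The gradient in \eqref{eq:primal_consensus_update} is, by the envelope (Danskin) argument applied to the inner maximization, $g_k^i = \frac{c}{N} - V_1^i\bigl(\pi^i_\star(\lambda_k^i)\bigr)$. Algorithm~\ref{alg:Alg1} performs $\mathscr{L}$ consensus rounds per iteration. In matrix form, with $W = (1-\epsilon)I + \epsilon D^{-1}A$ the random-walk consensus matrix (row stochastic, $D$ the degree matrix, $A$ the adjacency matrix), one iteration reads
\begin{equation*}
\lambda_{k+1} = W^{\mathscr{L}}\bigl(\lambda_k - \alpha g_k\bigr).
\end{equation*}
The left Perron eigenvector of $D^{-1}A$ is exactly $w$, with $w^i = |\mathcal{N}^i|/\sum_j |\mathcal{N}^j|$. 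This explains the weights in \eqref{eq:thm_consensus}: the weighted average $\bar\lambda_k = w^\top\lambda_k$ is preserved by $W$. The ``mild conditions'' I would impose are that $G$ is connected and non-bipartite (or that $\epsilon<1$ makes $W$ lazy), so that $W$ is primitive. Then $\rho := \|W - \1 w^\top\|$, taken in the $D$-weighted norm in which $W$ is self-adjoint, satisfies $\rho<1$. It equals the second largest eigenvalue modulus.

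\textbf{Key steps.} First, define the disagreement $e_k = \lambda_k - \1 w^\top\lambda_k = (I - \1 w^\top)\lambda_k$. Since $(I-\1w^\top)W^{\mathscr{L}} = (W-\1w^\top)^{\mathscr{L}}(I-\1w^\top)$, we get the recursion
\begin{equation*}
e_{k+1} = (W - \1 w^\top)^{\mathscr{L}}\bigl(e_k - \alpha (I-\1 w^\top) g_k\bigr).
\end{equation*}
Second, bound the forcing term. The constant $c/N$ is annihilated by $I-\1w^\top$, so only the deviations of $V_1^i$ from their average survive. The hypothesis of the theorem bounds these by $\sigma$ relative to the uniform mean. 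I would pass to the $w$-weighted mean by noting that $(I-\1w^\top)(I - \tfrac1N \1\1^\top) = I-\1w^\top$. This lets us apply the projection to the centered vector whose entries are bounded by $\sigma$, giving $\|(I-\1w^\top)g_k\|\le \sigma$ up to a norm-equivalence constant that I would absorb by working in the appropriate weighted/max norm. Third, take norms to obtain
\begin{equation*}
\|e_{k+1}\| \le \rho^{\mathscr{L}}\|e_k\| + \rho^{\mathscr{L}}\alpha\sigma.
\end{equation*}
Unrolling this geometric recursion gives $\limsup_k \|e_k\| \le \frac{\rho^{\mathscr{L}}}{1-\rho^{\mathscr{L}}}\alpha\sigma$. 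Finally, $\lambda_{k+1}-\1 w^\top\lambda_k$ differs from $e_{k+1}$ only by $\1 w^\top(\lambda_{k+1}-\lambda_k) = -\alpha\1 w^\top g_k$. This term is the common drift of the average, and it vanishes asymptotically when the dual iteration converges, that is, when the weighted constraint slack goes to zero at a fixed point. That yields \eqref{eq:thm_consensus}.

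\textbf{Main obstacle.} I expect the delicate points to be the norm bookkeeping and the handling of the average drift. Because $W$ is not symmetric, $\rho$ must be defined as a contraction factor in the $D$-weighted inner product, and the $\sigma$ bound has to be transferred into that norm. I would first try to simply absorb the constants into the definition of $\rho$, which the statement's vague ``relate to the graph's spectral properties'' permits. For the drift term, I would argue that at the limit the multipliers are stationary (using the projection onto $\mathbb{R}_+$ and the convergence guarantee of \citep[Theorem 1]{StateAug}), so $w^\top g_k\to 0$. If that fails, I would instead interpret the statement as bounding $e_{k+1}$ and add the $O(\alpha)$ drift as a separate term.
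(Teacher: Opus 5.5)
Your core argument matches the paper's. It uses the same Perron matrix ($W=P=I-\epsilon L^{\mathrm{rw}}$), the same disagreement recursion through $P^{\mathscr{L}}$, and the same geometric unrolling of $\|e_{k+1}\|\le\rho^{\mathscr{L}}(\|e_k\|+\alpha\sigma)$. Your norm bookkeeping is actually more careful than the paper's. The paper asserts $\|P^{\mathscr{L}}\|=\rho^{\mathscr{L}}$ ``on the subspace orthogonal to $\mathbf{1}$'' for a non-symmetric $P$. It also silently replaces the uniform-mean hypothesis of the statement by a bound on deviations from the degree-weighted mean.

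If you commit to the normalized norm $\|x\|_w^2=\sum_i w^i x_i^2$, nothing needs to be absorbed. In this norm $W$ is self-adjoint, so $\|(W-\mathbf{1}w^\top)^{\mathscr{L}}\|_w=\rho^{\mathscr{L}}$. Also, $I-\mathbf{1}w^\top$ is a $w$-orthogonal projection, so your identity gives $\|(I-\mathbf{1}w^\top)g_k\|_w\le\|u_k\|_w\le\sigma$ exactly, where $u_k$ is the uniformly centred vector of constraint values. Absorbing a constant that multiplies $\sigma$ into $\rho$ would not be legitimate, and the max norm does not give $\rho$.

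Your recursion drops the projection in line 6 of Algorithm~\ref{alg:Alg1}. The true update is $e_{k+1}=(W-\mathbf{1}w^\top)^{\mathscr{L}}(I-\mathbf{1}w^\top)[\lambda_k-\alpha g_k]_+$, so you need $\|(I-\mathbf{1}w^\top)[x]_+\|_w\le\|(I-\mathbf{1}w^\top)x\|_w$. This follows from $\|(I-\mathbf{1}w^\top)y\|_w^2=\tfrac12\sum_{i,j}w^iw^j(y_i-y_j)^2$ and the $1$-Lipschitzness of $[\cdot]_+$. The paper appeals to non-expansiveness of the projection at this point.

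The step that genuinely fails is the drift argument. Nothing guarantees that the multipliers converge. The step size $\alpha$ is fixed, Remark~\ref{rem:instantiation} uses single-sample $V_{1,k}^i$, and \citep[Theorem 1]{StateAug} gives feasibility of trajectories, not convergence of $\lambda_k$. So $w^\top g_k\to 0$ is unfounded.

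In fact the bound with $\sum_i w^i\lambda^i_k$ is false as literally written. Identical agents initialized at $\lambda^i_0=0$ stay in agreement, so $\sigma=0$ and the consensus step does nothing. The left-hand side is then $\|\mathbf{1}\|\,|\lambda_{k+1}-\lambda_k|$, which stays of order $\alpha$ forever in two cases. One is when $V_1^i(\pi^i_\star(\cdot))$ jumps across $c/N$, as in the tabular setting of Remark~\ref{rem:exact_vs_approx}, where the fixed-step dual iterate oscillates. The other is when the constraint is infeasible and $\lambda_k$ drifts upward.

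The paper never confronts this index issue. Its appendix theorem bounds $e_{k+1}=\lambda_{k+1}-\hat{\lambda}_{k+1}\mathbf{1}$, where $\hat{\lambda}_{k+1}$ is the weighted average of $\lambda_{k+1}$; it effectively reads the index in the statement as $k+1$. Your fallback is therefore exactly the result that is proved. Drop the convergence claim and state the theorem in that form.
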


\noindent
Theorem~\ref{thm:consensus_bound} guarantees that agents' local multipliers remain close to one another throughout execution. The proof appears in Appendix~\ref{sec:theoretical_analysis}. The bound decreases as $\mathscr{L}$ increases, indicating that additional consensus steps reduce the discrepancy among agents' multipliers. In practice, a small $\rho$ (which occurs in well-connected graphs) accelerates convergence, allowing $\mathscr{L}$ to remain small. For many real-world network structures, a single consensus iteration ($\mathscr{L} = 1$) per gradient step suffices to keep the discrepancy in $\lambda^i$ below an acceptable threshold.

A natural question is whether bounded multiplier disagreement translates into bounded constraint violation and near-optimal performance. We answer this affirmatively under two additional assumptions.

\begin{assumption}[Lipschitz Sensitivity of Constraint Value]
\label{ass:lipschitz}
For each agent $i$, the map
$\lambda \mapsto V_1^i\!\bigl(\pi^i_\star(\lambda)\bigr)$
is $L_V$-Lipschitz on $\mathbb{R}_+$:
\begin{equation}
  \bigl|V_1^i\!\bigl(\pi^i_\star(\lambda)\bigr)
        - V_1^i\!\bigl(\pi^i_\star(\lambda')\bigr)\bigr|
  \;\le\; L_V\,|\lambda - \lambda'|,
  \qquad \forall\;\lambda,\lambda'\ge 0,
  \quad \forall\; i.
  \label{eq:lipschitz_V1}
\end{equation}
\end{assumption}

\begin{assumption}[Approximate Policy Optimality]
\label{ass:approx}
Each agent's learned policy $\hat{\pi}^i(\lambda)$ is
$\varepsilon_{\mathrm{approx}}$-optimal in both value functions:
for all $\lambda \ge 0$,
\begin{equation}
  \bigl|V_j^i\!\bigl(\hat{\pi}^i(\lambda)\bigr)
        - V_j^i\!\bigl(\pi^i_\star(\lambda)\bigr)\bigr|
  \;\le\; \varepsilon_{\mathrm{approx}},
  \qquad j \in \{0,1\},
  \quad \forall\; i.
  \label{eq:approx_opt}
\end{equation}
\end{assumption}

\noindent
Assumption~\ref{ass:lipschitz} captures how sensitively each agent's constraint-relevant behavior changes with the dual variable. It is naturally satisfied by state-augmented neural network policies, where $\lambda$ enters as a continuous input to a smooth function approximator; we discuss this in detail in Appendix~\ref{sec:sensitivity_discussion}. Assumption~\ref{ass:approx} accounts for function approximation error due to finite network capacity and finite training.

\begin{proposition}[Primal Sensitivity to Consensus Error]
\label{prop:sensitivity}
Let $\boldsymbol{\lambda} = (\lambda^1,\dots,\lambda^N)$ be the vector
of local dual variables, let
$\bar{\lambda} = \sum_{i=1}^{N} w^i \lambda^i$
denote the degree-weighted average
($w^i = |\mathcal{N}^i|/\sum_j |\mathcal{N}^j|$),
and let $\lambda^\star$ denote the centralized dual optimum.
Suppose Assumptions~\ref{ass:lipschitz} and \ref{ass:approx} hold, and
the consensus error satisfies
$\|\boldsymbol{\lambda} - \bar{\lambda}\,\mathbf{1}\| \le \delta$
as guaranteed by Theorem~\ref{thm:consensus_bound}.

\medskip\noindent
\textbf{(a) Feasibility gap due to multiplier disagreement.}
The aggregate constraint value under heterogeneous multipliers
deviates from that under the common average multiplier by at most
\begin{equation}
  \biggl|\sum_{i=1}^{N} V_1^i\!\bigl(\hat{\pi}^i(\lambda^i)\bigr)
         - \sum_{i=1}^{N} V_1^i\!\bigl(\pi^i_\star(\bar{\lambda})\bigr)\biggr|
  \;\le\; L_V \sqrt{N}\,\delta \;+\; N\,\varepsilon_{\mathrm{approx}}.
  \label{eq:feasibility_gap}
\end{equation}

\medskip\noindent
\textbf{(b) Constraint violation bound.}
If the centralized dual optimum yields a feasible
solution, i.e.\
$\sum_{i=1}^{N} V_1^i\!\bigl(\pi^i_\star(\lambda^\star)\bigr) \le c$,
then the distributed solution satisfies
\begin{equation}
  \sum_{i=1}^{N} V_1^i\!\bigl(\hat{\pi}^i(\lambda^i)\bigr)
  \;\le\;
  c
  \;+\; L_V\!\bigl(\sqrt{N}\,\delta + N\,|\bar{\lambda} - \lambda^\star|\bigr)
  \;+\; N\,\varepsilon_{\mathrm{approx}}.
  \label{eq:feasibility_absolute}
\end{equation}

\medskip\noindent
\textbf{(c) Optimality gap.}
If $V_0^i\!\bigl(\pi^i_\star(\cdot)\bigr)$ is also
$L_0$-Lipschitz on $\mathbb{R}_+$, then
\begin{equation}
  \biggl|\sum_{i=1}^{N} V_0^i\!\bigl(\hat{\pi}^i(\lambda^i)\bigr)
         - \sum_{i=1}^{N} V_0^i\!\bigl(\pi^i_\star(\lambda^\star)\bigr)\biggr|
  \;\le\;
  L_0\!\bigl(\sqrt{N}\,\delta + N\,|\bar{\lambda} - \lambda^\star|\bigr)
  \;+\; N\,\varepsilon_{\mathrm{approx}}.
  \label{eq:optimality_gap}
\end{equation}
\end{proposition}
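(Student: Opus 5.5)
The plan is a triangle-inequality decomposition in which every error term is handled by either Assumption~\ref{ass:approx} or Assumption~\ref{ass:lipschitz}, with Cauchy--Schwarz turning the per-agent $\ell_1$ sum of multiplier deviations into the $\ell_2$ consensus error $\delta$.

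\textbf{Part (a).} For each agent $i$ I split
\[
V_1^i(\hat\pi^i(\lambda^i)) - V_1^i(\pi^i_\star(\bar\lambda)) = \bigl[V_1^i(\hat\pi^i(\lambda^i)) - V_1^i(\pi^i_\star(\lambda^i))\bigr] + \bigl[V_1^i(\pi^i_\star(\lambda^i)) - V_1^i(\pi^i_\star(\bar\lambda))\bigr].
\]
By Assumption~\ref{ass:approx}, the first bracket is at most $\varepsilon_{\mathrm{approx}}$ in absolute value. By Assumption~\ref{ass:lipschitz}, the second is at most $L_V|\lambda^i-\bar\lambda|$. Two points need care here: $\lambda^i\ge 0$ must hold (guaranteed by projection onto $\mathbb{R}_+$ in the dual update), and $\bar\lambda\ge0$ then follows because it is a convex combination of nonnegative numbers with weights $w^i\ge 0$ summing to one. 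Summing over $i$ gives a bound of $N\varepsilon_{\mathrm{approx}} + L_V\sum_i|\lambda^i-\bar\lambda|$. Cauchy--Schwarz then yields $\sum_i|\lambda^i-\bar\lambda| \le \sqrt N\,\|\boldsymbol\lambda-\bar\lambda\mathbf 1\|\le\sqrt N\delta$, which is exactly \eqref{eq:feasibility_gap}.

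\textbf{Part (b).} I add and subtract $\sum_i V_1^i(\pi^i_\star(\lambda^\star))$:
\[
\sum_i V_1^i(\hat\pi^i(\lambda^i)) \le \sum_i V_1^i(\pi^i_\star(\lambda^\star)) + \Bigl|\sum_i V_1^i(\hat\pi^i(\lambda^i)) - \sum_i V_1^i(\pi^i_\star(\bar\lambda))\Bigr| + \Bigl|\sum_i V_1^i(\pi^i_\star(\bar\lambda)) - \sum_i V_1^i(\pi^i_\star(\lambda^\star))\Bigr|.
\]
The first term is at most $c$ by the feasibility hypothesis. The second is bounded by part (a). The third is at most $N L_V|\bar\lambda-\lambda^\star|$, by applying Assumption~\ref{ass:lipschitz} agent-wise, since both arguments are nonnegative. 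Collecting terms gives \eqref{eq:feasibility_absolute}.

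\textbf{Part (c).} This repeats parts (a) and (b) with $V_0^i$ in place of $V_1^i$, using the $L_0$-Lipschitz hypothesis and the $j=0$ case of Assumption~\ref{ass:approx}. I insert the intermediate terms $\sum_i V_0^i(\pi^i_\star(\lambda^i))$ and $\sum_i V_0^i(\pi^i_\star(\bar\lambda))$ and bound the three resulting differences by $N\varepsilon_{\mathrm{approx}}$, $L_0\sqrt N\delta$ (again via Cauchy--Schwarz), and $NL_0|\bar\lambda-\lambda^\star|$ respectively.

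\textbf{Main obstacle.} The argument is elementary; the delicate point is interpretive rather than technical. The statement assumes $\|\boldsymbol\lambda-\bar\lambda\mathbf 1\|\le\delta$ "as guaranteed by" Theorem~\ref{thm:consensus_bound}, but that theorem bounds a slightly different quantity in the limit. I would therefore take the $\delta$-bound as a hypothesis holding at the iterate in question, rather than try to derive it. I would also state explicitly that the dual iterates remain in $\mathbb{R}_+$, so that both the Lipschitz assumption (stated only on $\mathbb{R}_+$) and the approximation assumption (stated for $\lambda\ge 0$) apply at every $\lambda^i$ and at $\bar\lambda$.
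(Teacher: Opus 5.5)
Your proof is correct and follows the paper's argument essentially step for step. It uses the same per-agent split into an approximation term and a Lipschitz term, and Cauchy--Schwarz ($\|x\|_1\le\sqrt N\|x\|_2$) to pass to $\delta$. For (b) it adds one further Lipschitz step from $\bar\lambda$ to $\lambda^\star$ together with centralized feasibility, and for (c) it runs the same three-term chain with $L_0$.

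Your explicit check that all multipliers lie in $\mathbb{R}_+$ is a small point the paper leaves implicit. It relies not only on the projection in the gradient step but also on the consensus step being a convex combination: the Perron matrix is nonnegative and row-stochastic when $0<\epsilon<1$.
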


\noindent
The proof (Appendix~\ref{sec:sensitivity_proof}) combines Lipschitz sensitivity with Cauchy--Schwarz to translate the $\ell_2$ consensus error into an $\ell_1$ bound on individual deviations.

\begin{corollary}[Explicit Constraint Violation Bound]
\label{cor:explicit}
Under the conditions of Proposition~\ref{prop:sensitivity}
and Theorem~\ref{thm:consensus_bound},
if $\bar{\lambda}_k \to \lambda^\star$ and the centralized
solution is feasible, then asymptotically the constraint violation
is bounded by
\begin{equation}
  \sum_{i=1}^{N}
    V_1^i\!\bigl(\hat{\pi}^i(\lambda^i_k)\bigr)
  \;\le\;
  c
  \;+\;
  \frac{L_V\,\sqrt{N}\,\rho^{\mathscr{L}}\,\alpha\,\sigma}
       {1 - \rho^{\mathscr{L}}}
  \;+\;
  N\,\varepsilon_{\mathrm{approx}}.
  \label{eq:explicit_violation}
\end{equation}
In particular, the constraint violation vanishes as
(i)~$\mathscr{L} \to \infty$ (more consensus rounds),
(ii)~$\alpha \to 0$ (smaller dual step size),
(iii)~$\rho \to 0$ (better-connected graph), or
(iv)~$\varepsilon_{\mathrm{approx}} \to 0$ (more expressive
policy class).
\end{corollary}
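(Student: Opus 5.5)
The plan is to obtain the corollary by substituting the asymptotic consensus bound of Theorem~\ref{thm:consensus_bound} into part (b) of Proposition~\ref{prop:sensitivity}, and then using $\bar{\lambda}_k \to \lambda^\star$ to remove the dual-suboptimality term.

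Fix an iteration $k$ and write $\boldsymbol{\lambda}_k = (\lambda_k^1,\dots,\lambda_k^N)$ and $\bar{\lambda}_k = \sum_i w^i \lambda_k^i$. The proposition holds for any $\delta$ with $\|\boldsymbol{\lambda}_k - \bar{\lambda}_k \mathbf{1}\| \le \delta$, so choose $\delta_k := \|\boldsymbol{\lambda}_k - \bar{\lambda}_k \mathbf{1}\|$. Since the centralized solution is feasible, part (b) gives, at every $k$,
\begin{equation*}
  \sum_{i=1}^{N} V_1^i\!\bigl(\hat{\pi}^i(\lambda_k^i)\bigr)
  \le c + L_V \sqrt{N}\,\delta_k + L_V N\,|\bar{\lambda}_k - \lambda^\star| + N\,\varepsilon_{\mathrm{approx}}.
\end{equation*}
Pass to $\limsup_{k\to\infty}$ on both sides. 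By hypothesis $|\bar{\lambda}_k - \lambda^\star| \to 0$, so the middle term disappears. Theorem~\ref{thm:consensus_bound} bounds the limit of the consensus error by $\frac{\rho^{\mathscr{L}}}{1-\rho^{\mathscr{L}}}\alpha\sigma$, hence $\limsup_k \delta_k \le \frac{\rho^{\mathscr{L}}}{1-\rho^{\mathscr{L}}}\alpha\sigma$. Substituting yields \eqref{eq:explicit_violation}, interpreted as a $\limsup$ statement, which is what ``asymptotically'' means here.

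The main obstacle is matching the quantity in Theorem~\ref{thm:consensus_bound} with $\delta_k$. The theorem is written as $\|\lambda_{k+1} - \sum_i w^i \lambda_k^i\|$, which mixes indices $k+1$ and $k$, whereas the proposition needs $\|\boldsymbol{\lambda}_k - \bar{\lambda}_k\mathbf{1}\|$. I would handle this as follows:
\begin{equation*}
  \|\boldsymbol{\lambda}_{k+1} - \bar{\lambda}_{k+1}\mathbf{1}\|
  \le \|\boldsymbol{\lambda}_{k+1} - \bar{\lambda}_k\mathbf{1}\|
  + \sqrt{N}\,|\bar{\lambda}_{k+1} - \bar{\lambda}_k|.
\end{equation*}
The last term tends to zero because $\bar{\lambda}_k$ converges, so it is Cauchy. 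This gives $\limsup_k \delta_k \le \frac{\rho^{\mathscr{L}}}{1-\rho^{\mathscr{L}}}\alpha\sigma$. If the appendix proof already bounds the deviation from the weighted average at the same index, this step can be dropped.

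The limiting statements (i)--(iv) then follow by inspecting the right-hand side. Since $\rho<1$ under the connectivity conditions, $\rho^{\mathscr{L}}/(1-\rho^{\mathscr{L}}) \to 0$ both as $\mathscr{L}\to\infty$ and as $\rho \to 0$. The consensus term is also linear in $\alpha$, so it vanishes as $\alpha \to 0$. The approximation term vanishes as $\varepsilon_{\mathrm{approx}}\to 0$. One caveat should be noted: $\sigma$ must not blow up as these parameters change. I would argue this from compactness of $S^i \times A^i$ and boundedness of the rewards, which give $|V_1^i| \le \max|r_1^i|$ uniformly, so $\sigma$ can be taken as a fixed constant independent of $\alpha$, $\rho$, and $\mathscr{L}$.
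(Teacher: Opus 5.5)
Your proposal is correct and follows the derivation the paper leaves implicit (it gives no separate proof of the corollary): take $\delta$ to be the asymptotic consensus bound from Theorem~\ref{thm:consensus_bound}, substitute it into part~(b) of Proposition~\ref{prop:sensitivity}, and let the dual-convergence term $L_V N|\bar{\lambda}_k-\lambda^\star|$ vanish, reading ``asymptotically'' as a $\limsup$ statement. Your index-matching repair is valid but unnecessary, because the appendix version (Theorem~\ref{theorem:consensus_error_bound}) already bounds $\|\lambda_{k+1}-\hat{\lambda}_{k+1}\mathbf{1}\|$, with the weighted average taken at the same index.
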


\begin{remark}[Dual Convergence]
\label{rem:dual_convergence}
The condition $\bar{\lambda}_k \to \lambda^\star$ in
Corollary~\ref{cor:explicit} holds under standard diminishing
step-size schedules for projected subgradient methods on the
convex dual problem~\citep{nedic2009primaldual}. In our experiments
we use a fixed step size, so $|\bar{\lambda} - \lambda^\star|$
remains bounded but non-vanishing; however,
Figure~\ref{fig:lambda_conv} shows empirically that the converged
multiplier value closely approximates the centralized optimum.
\end{remark}

We note that Danskin's theorem also justifies the
multiplier update in Algorithm~\ref{alg:Alg1}. For every fixed
multiplier~$\lambda$, the policy $\pi_\star(\lambda)$ is
\emph{defined} as a maximizer of the inner problem
$\max_{\pi}\,\mathcal{L}(\pi,\lambda)$. By Danskin's theorem, the
gradient of this maximized objective with respect to~$\lambda$
depends only on the partial derivative of $\mathcal{L}$, evaluated at
the maximizer. Hence, in the multiplier update (Lines 6--8 of
Algorithm~\ref{alg:Alg1}) we treat $\pi_\star$ as constant without
loss of correctness. This argument is standard in Lagrangian-based
constrained RL~\citep[see, e.g.,][]{StateAug}.

\begin{algorithm}
\caption{Distributed multiplier update with Separated Consensus and Gradient Steps}
\label{alg:Alg1}  
\begin{algorithmic}[1]
\State \textbf{Input:} Trained policies $\pi_\star^i(\lambda)$, learning rates $\alpha$, $\epsilon$, requirement $c$, number of consensus steps $\mathscr{L}$
\State \textbf{Output:} Trajectories satisfying the constraints
\State \textbf{Initialize:} Dual variables $\lambda_0^i = 0$, $\mu_0^i=0$ for $i=1,\dots, N$
\For{$k = 0, 1, \dots, K-1$}
    \State \textbf{Gradient Descent Step:}
    \State $\lambda^i_{k+\frac{1}{2}} = \left[\lambda^i_k - \alpha \left( \frac{c}{N} - V^i_{1,k} \right)\right]_+$
    \State \textbf{Initialize Consensus Variable:}
    \State $\lambda^i_{\ell=0} = \lambda^i_{k+\frac{1}{2}}$
    \For{$\ell = 0, \dots, \mathscr{L}-1$}
        \State \textbf{Consensus Update:}
        \State $\lambda^i_{\ell+1} = \lambda^i_{\ell} - \epsilon \left(\lambda^i_{\ell} -  \frac{1}{|\mathcal{N}_i|}\sum_{j \in \mathcal{N}_i} \lambda^j_{\ell} \right)$
    \EndFor
    \State \textbf{Update for Next Iteration:}
    \State $\lambda^i_{k+1} = \lambda^i_{\ell=\mathscr{L}}$
\EndFor
\end{algorithmic}
\end{algorithm}
\normalsize

In practice, we perform only one consensus iteration per time step ($ \mathscr{L} = 1 $).

\begin{remark}[Instantiation of $V_{1,k}^i$]
\label{rem:instantiation}
In Algorithm~\ref{alg:Alg1}, $V_{1,k}^i = r_1^i(s_k^i, a_k^i)$
is the constraint reward observed by agent~$i$ at timestep~$k$,
serving as a single-sample stochastic estimate of
$V_1^i(\pi^i_\star(\lambda_k^i))$. This corresponds to performing
stochastic (sub)gradient descent on the
dual~\citep{nedic2009primaldual}, where the consensus mechanism
ensures that the resulting noisy updates remain coordinated
across agents.
\end{remark}

\section{Use Case: Smart Grid Management}
\label{sec:use_case}

We apply our method to Demand Response (DR) in a district of buildings with solar energy and battery storage. Our goal is to minimize energy costs for each building while avoiding critical grid peaks through energy storage and load shifting. Each building's agent observes the current demand, battery charge, and grid price, then decides how to allocate energy between grid and battery sources. The local objective is defined as
$r_0^i(s_t^i, a_t^i) = -e_{\text{grid}}(s_t^i, a_t^i)\, p_t,$ where $e_{\text{grid}}(s_t^i, a_t^i)$ is the building’s grid consumption and 
$p_t$ is the energy price at time \(t\). By maximizing $r_0^i,$ agents minimize their grid electricity spending while respecting global consumption constraints. 

The secondary reward $r_1^i(s_t^i, a_t^i) = e_{\text{grid}}(s_t^i, a_t^i)$ with constraint $\sum_{i=1}^{N} V_1^i(\pi^i) \le c$ ensures that average total grid usage stays below threshold $c$ (a percentage of peak demand), maintaining grid stability. Agents can postpone unmet demand for later, and batteries charge automatically from solar generation. To ensure all demand is eventually met, we add a local constraint with reward

\begin{equation}
    r_{2}^i(s_t^i, a_t^i) 
    =d_t^i - e_{\text{grid}}(s_t^i, a_t^i)
    - e_{\text{bat}}(s_t^i, a_t^i),
\end{equation}

where $d_t^i$ is the demand of agent $i$ at time $t$ and $e_{\text{bat}}(s_t^i, a_t^i)$ is the battery-delivered energy. Let $V_{2}^i(\pi^i)$ be the corresponding value function, defined as in \eqref{eq:Val}. We then impose the local constraint
\begin{equation*}
    V_{2}^i(\pi^i) = 0,
\end{equation*}
which ensures that, in expectation, all of agent $i$’s demand is met over the long run. Since the constraint is local, it only affects the optimization problem of agent $i$. The training of the policy is performed following the state augmented procedure described in Section \ref{sec:training} and the updates of the global constraint and the consensus multipliers are performed as shown in Algorithm \ref{alg:Alg1}. For the handling of the local constraint we just add another term to the Lagrangian which only needs the addition of the following update

\begin{align}
    \label{eq:LocalConst}
    \nu^i_{k+1} = \nu^i_{k} - \eta \left(d^i_k -e_{\text{grid}}(s_{k}^i, a_{k}^i) - e_{\text{bat}}(s_{k}^i, a_{k}^i)\right),
\end{align}
with step size $\eta$. Energy prices, demand, and solar generation data come from City Learn \citep{City2,City1,CityLearn}. Since this constraint is purely local, $\nu^i$ does not participate in the consensus mechanism: each $\nu^i$ is updated independently
using agent~$i$'s own unmet demand signal. Consequently, the local constraint does not affect the consensus analysis in
Theorem~\ref{thm:consensus_bound} or Proposition~\ref{prop:sensitivity}, which concern only the global
multiplier~$\lambda$.

\section{Experimental Results}
\label{sec:experiments}

 \begin{figure}[t]
    \centering
    \begin{subfigure}[b]{0.25\textwidth}
        \centering
        \includegraphics[width=0.65\columnwidth]{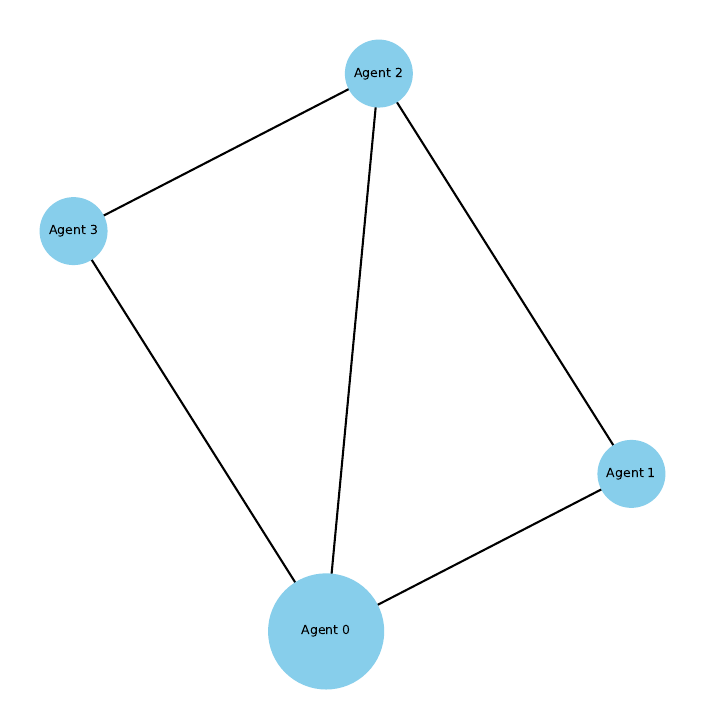}
        \caption{\scriptsize Circular connections, one agent with double demand}
        \label{fig:g1}
    \end{subfigure}
    \hfill
    \begin{subfigure}[b]{0.25\textwidth}
        \centering
        \includegraphics[width=0.65\columnwidth]{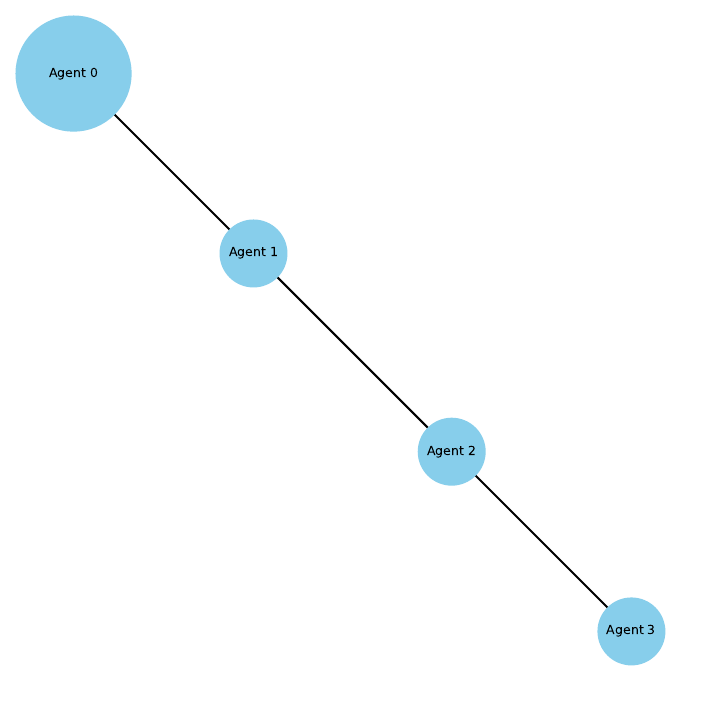}
        \caption{\scriptsize Linear connections, one agent with double demand}
        \label{fig:g2}
    \end{subfigure}
    \hfill
    \begin{subfigure}[b]{0.25\textwidth}
        \centering
        \includegraphics[width=0.65\columnwidth]{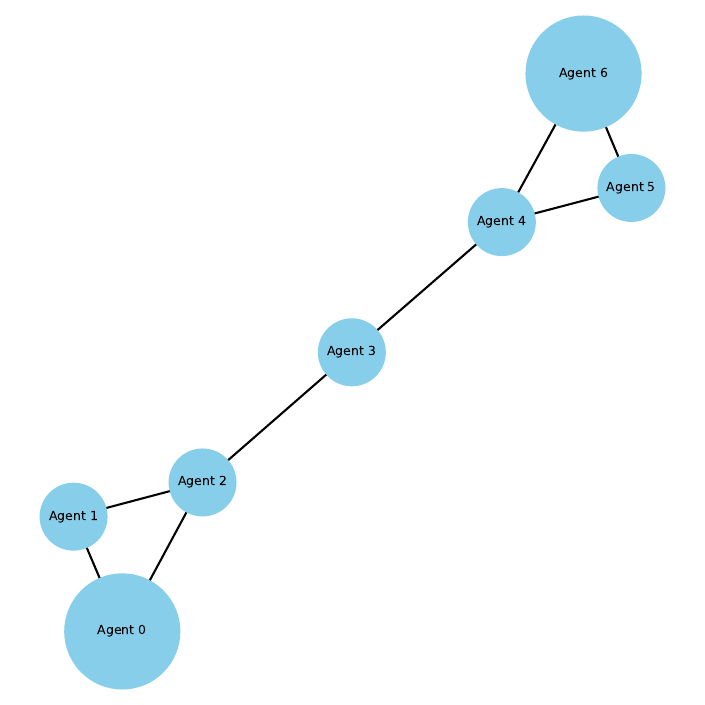}
        \caption{\scriptsize Two clusters, two agents with double demand}
        \label{fig:g4}
    \end{subfigure}
    \caption{Different communication networks and agent demands.}
    \label{fig:AgentConfigs}
\end{figure}

We test our method\footnote{Execution experiments were carried out on a MacBook Pro M3 with 8\,GB RAM. Training benchmarks were measured on a 16-core CPU with 21\,GB RAM.} on the network configurations in Figure \ref{fig:AgentConfigs}, which vary in connectivity and demand diversity. Less-connected networks challenge consensus, while heterogeneous demands create problems that require coordination to solve. We focus on the configuration in Figure \ref{fig:g4}: two weakly connected groups where one agent in each has double the demand of others. Using PPO~\citep{ppo} on the Farama Gymnasium framework~\citep{gymnasium}, we train just two policies—one for normal demand, one for double demand—demonstrating the efficiency of single-agent training with multi-agent execution. Individual Lagrange multipliers ($\lambda^i$) enable coordination during execution, with consensus being critical for linking training to execution and ensuring constraint satisfaction. Training uses $10^6$ PPO timesteps per agent type, in episodes of 80 timesteps (about 3 days of demand), with multipliers sampled from $\lambda \in [0,15]$ and $\nu \in [-25,25]$. The dataset contains 3,000 hours of data with random episode starting points.

All multiplier step sizes are set to
$\alpha = \epsilon = \eta = 0.01$, and we use
$\mathscr{L} = 1$ consensus round per timestep.
The step sizes were selected to balance convergence speed
against oscillation amplitude; the consensus bound in
Theorem~\ref{thm:consensus_bound} shows that the asymptotic
error scales as $\alpha\,\sigma / (1 - \rho)$, so smaller
$\alpha$ reduces steady-state error at the cost of slower
adaptation. A single consensus round ($\mathscr{L} = 1$)
is sufficient because the spectral gap $1 - \rho$ of our
communication graphs is large enough to keep the consensus
error within acceptable bounds
(Section~\ref{subsec:consensus_necessity}).

We focus on smart grid management as it naturally fits our structural assumptions while remaining complex enough to demonstrate consensus necessity. We validate the approach through: (i)~ablations demonstrating the necessity of both consensus and state augmentation (Sections~\ref{subsec:consensus_necessity}--\ref{subsec:baselines}), (ii)~comparison against a centralized oracle establishing that distributed consensus incurs negligible performance loss (Section~\ref{subsec:centralized}), and (iii)~scaling to 1,000 agents, far beyond CTDE capabilities (Section~\ref{subsec:scalability}).

\subsection{Consensus Necessity}
\label{subsec:consensus_necessity}
We set the constraint $c$ to 27\% of peak demand (challenging yet feasible). Agents run for 3,000 timesteps with continuous multiplier updates (Algorithm \ref{alg:Alg1}). To demonstrate coordination importance, we compare two variants: with \emph{consensus}, agents exchange multipliers $\lambda^i$ with neighbors and average them (lines 8--13 in Algorithm~\ref{alg:Alg1}); without \emph{consensus}, agents only perform local gradient updates without coordination. While both maintain grid consumption below the threshold (Figure \ref{fig:Cons_comp}), the no-consensus version achieves this by indefinitely postponing demand rather than finding a true solution.

\begin{figure}
  \centering
  \begin{subfigure}[b]{0.8\linewidth}
    \includegraphics[width=\linewidth]{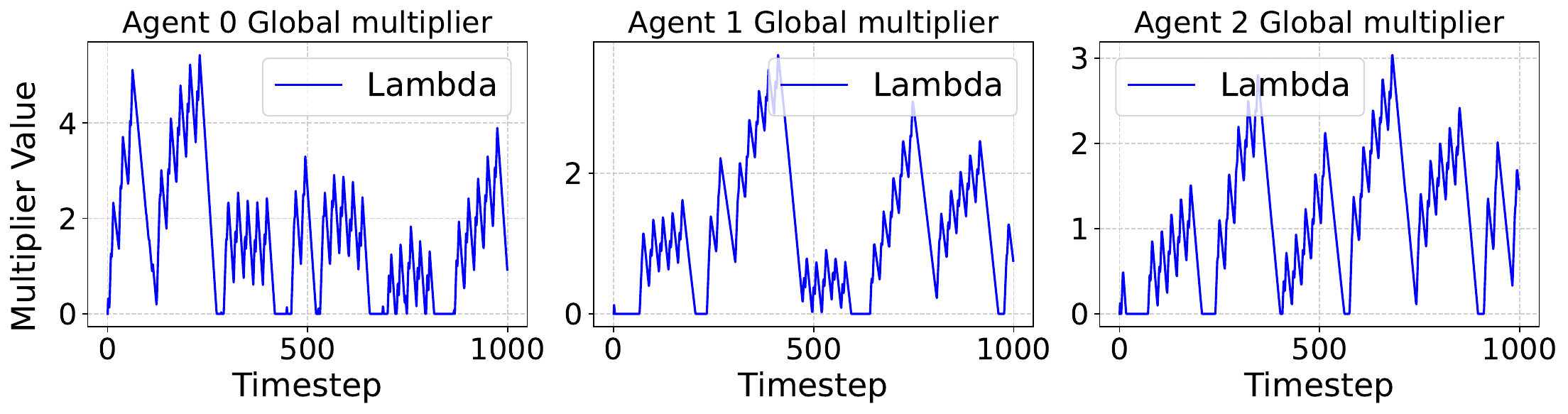}
    \caption{With consensus}
    \label{fig:lambda_cons}
  \end{subfigure}\hfill
  \\
  \begin{subfigure}[b]{0.8\linewidth}
    \includegraphics[width=\linewidth]{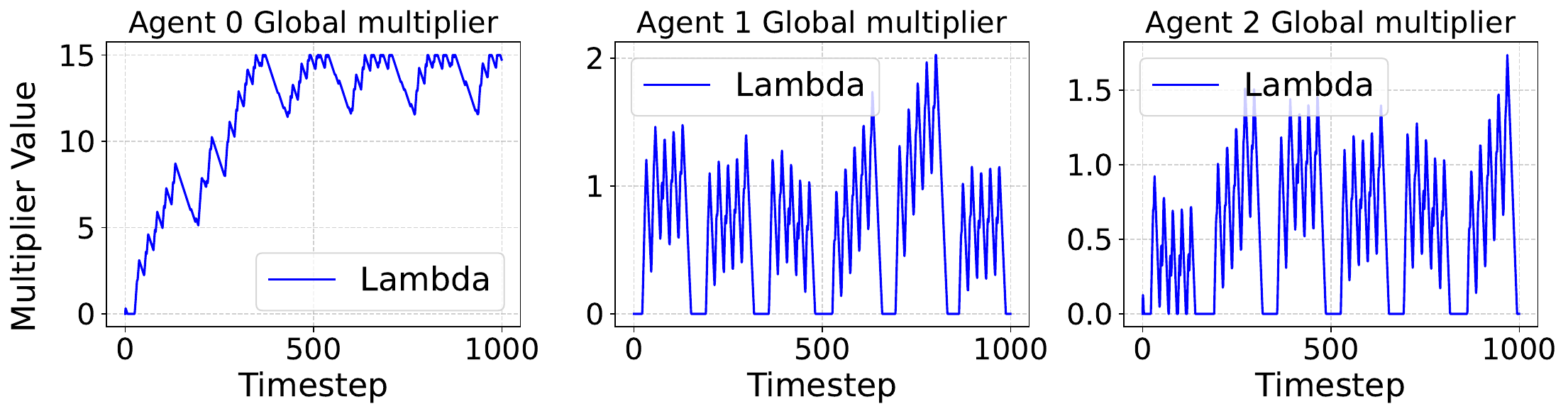}
    \caption{Without consensus}
    \label{fig:lambda_no_cons}
  \end{subfigure}
  \caption{Evolution of the global multipliers $\lambda^i$ for the first three agents during execution.}
  \label{fig:States}
\end{figure}

\begin{figure}[t]
  \centering
  \begin{subfigure}{0.48\columnwidth}
    \centering
    \includegraphics[width=\linewidth]{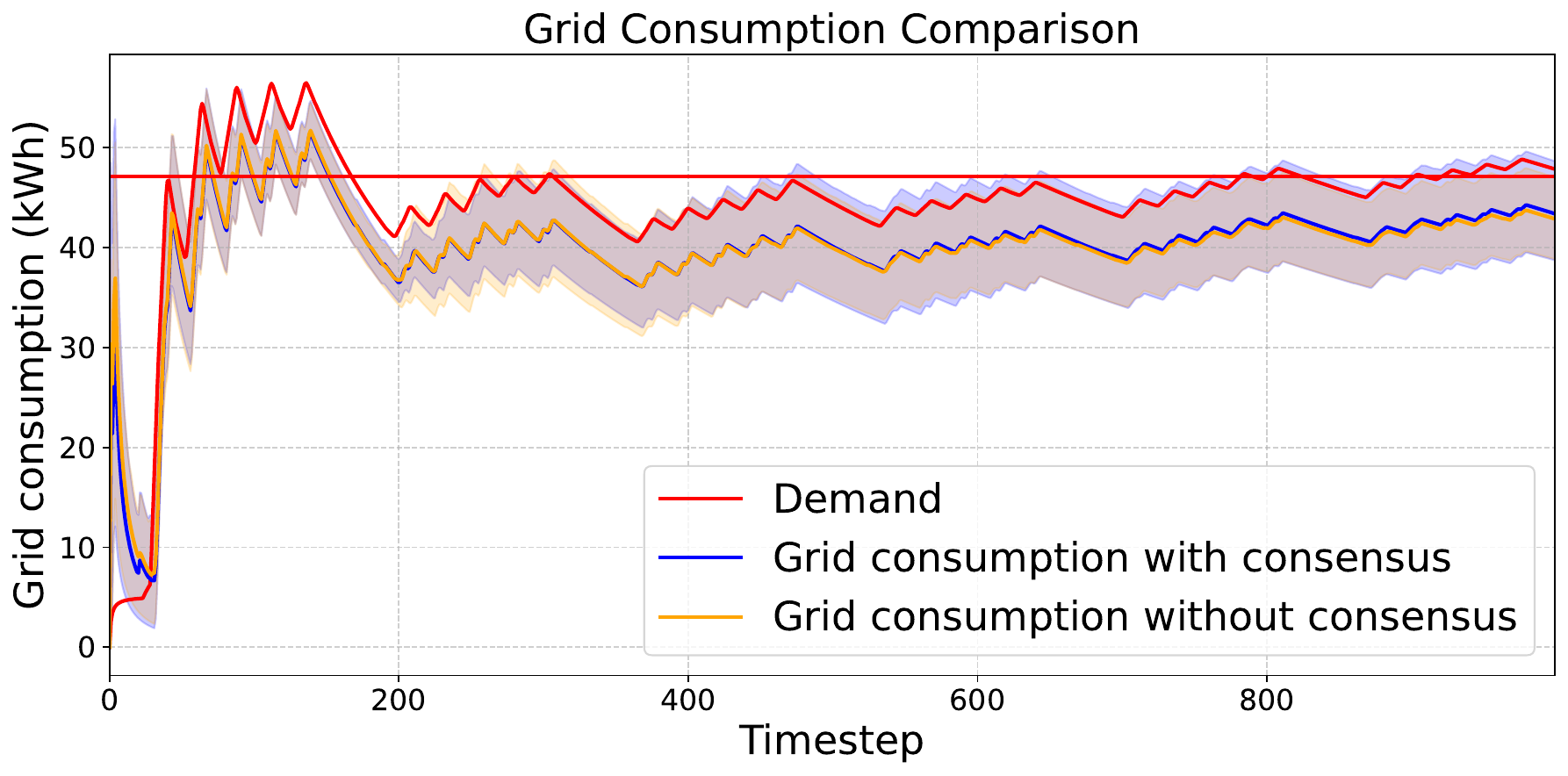}
    \caption{Avg.\ total grid consumption.}
    \label{fig:Cons_comp}
  \end{subfigure}
  \hfill
  \begin{subfigure}{0.48\columnwidth}
    \centering
    \includegraphics[width=\linewidth]{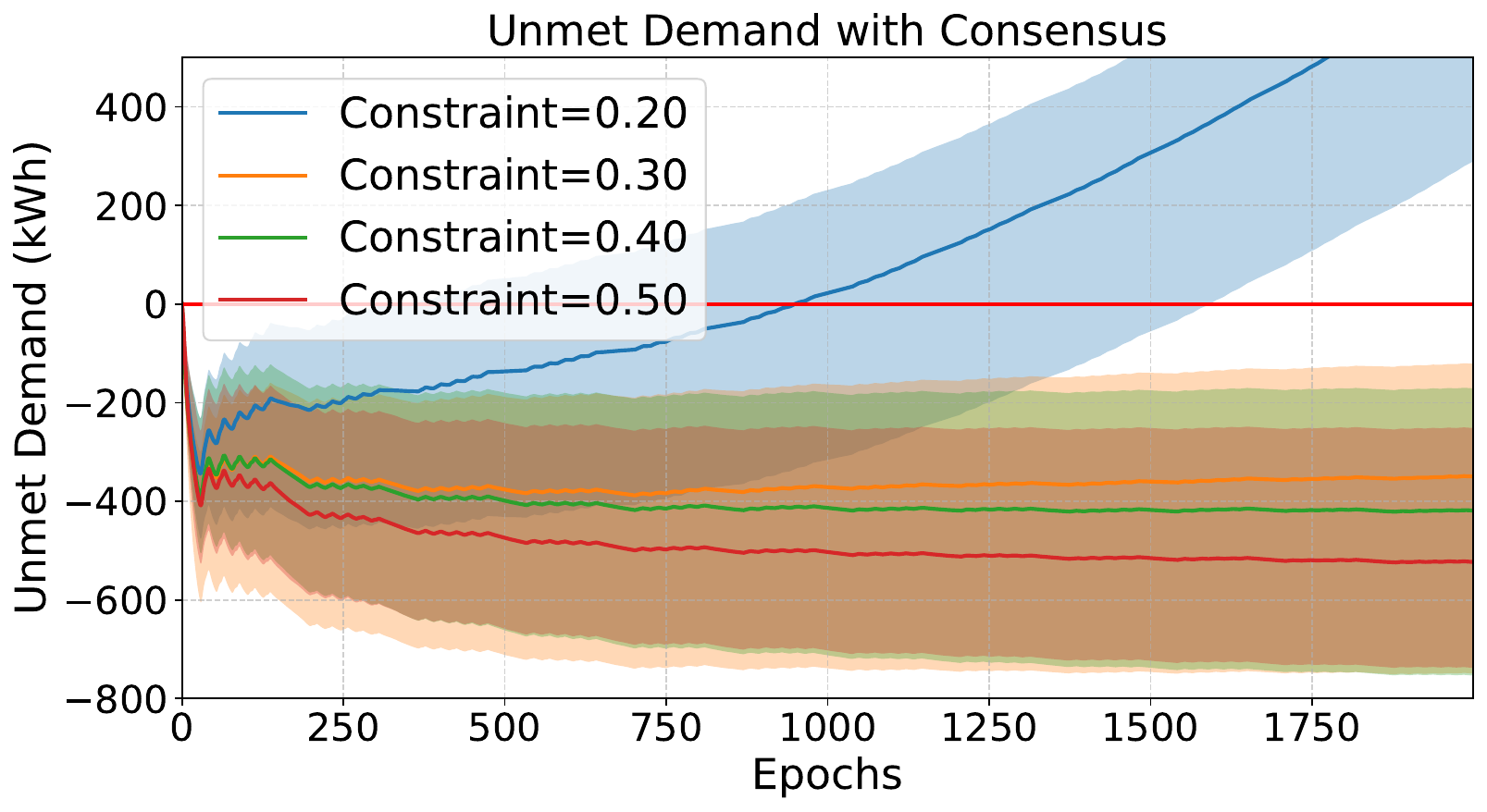}
    \caption{Unmet demand (with consensus).}
    \label{fig:unmet_cons}
  \end{subfigure}
  \hfill
  \caption{Cumulative unmet demand with consensus. Stable negative values (c=0.3, 0.4) indicate proactive load shifting; stable zero indicates exact demand matching (no control); unbounded positive growth (c=0.2) indicates infeasibility.}
  \label{fig:comparison}
\end{figure}

Testing four constraint levels $c \in \{0.2, 0.3, 0.4, 0.5\}$, we find that consensus achieves stable unmet demand for feasible cases ($c \geq 0.3$), with lower constraints allowing more grid usage as expected (Figure \ref{fig:unmet_cons}). At $c=0.2$, even consensus cannot find a solution. Without consensus, the problem becomes infeasible even for moderate constraints like $c \in \{0.3, 0.4\}$ (Figure \ref{fig:unmet_nocons}). Crucially, at our target $c=27\%$, the no-consensus version fails to solve the problem despite meeting grid constraints—its multipliers never converge (Figure \ref{fig:lambda_no_cons}), preventing optimal solution discovery.

\begin{figure}
  \centering
  \begin{subfigure}[b]{0.48\linewidth}
    \includegraphics[width=\linewidth]{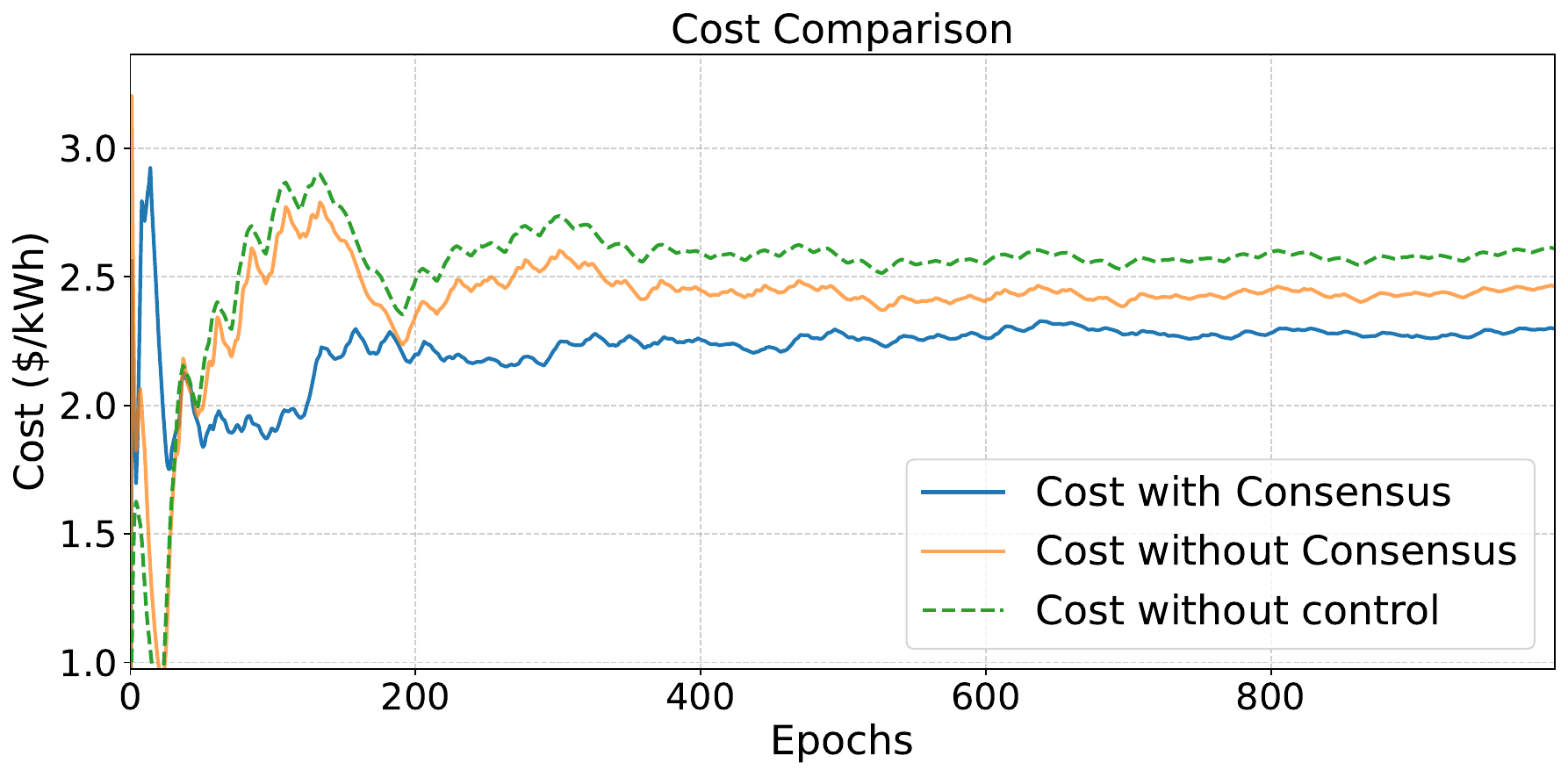}
    \caption{Mean energy cost}
    \label{fig:cost}
  \end{subfigure}\hfill
  \begin{subfigure}[b]{0.48\linewidth}
    \includegraphics[width=\linewidth]{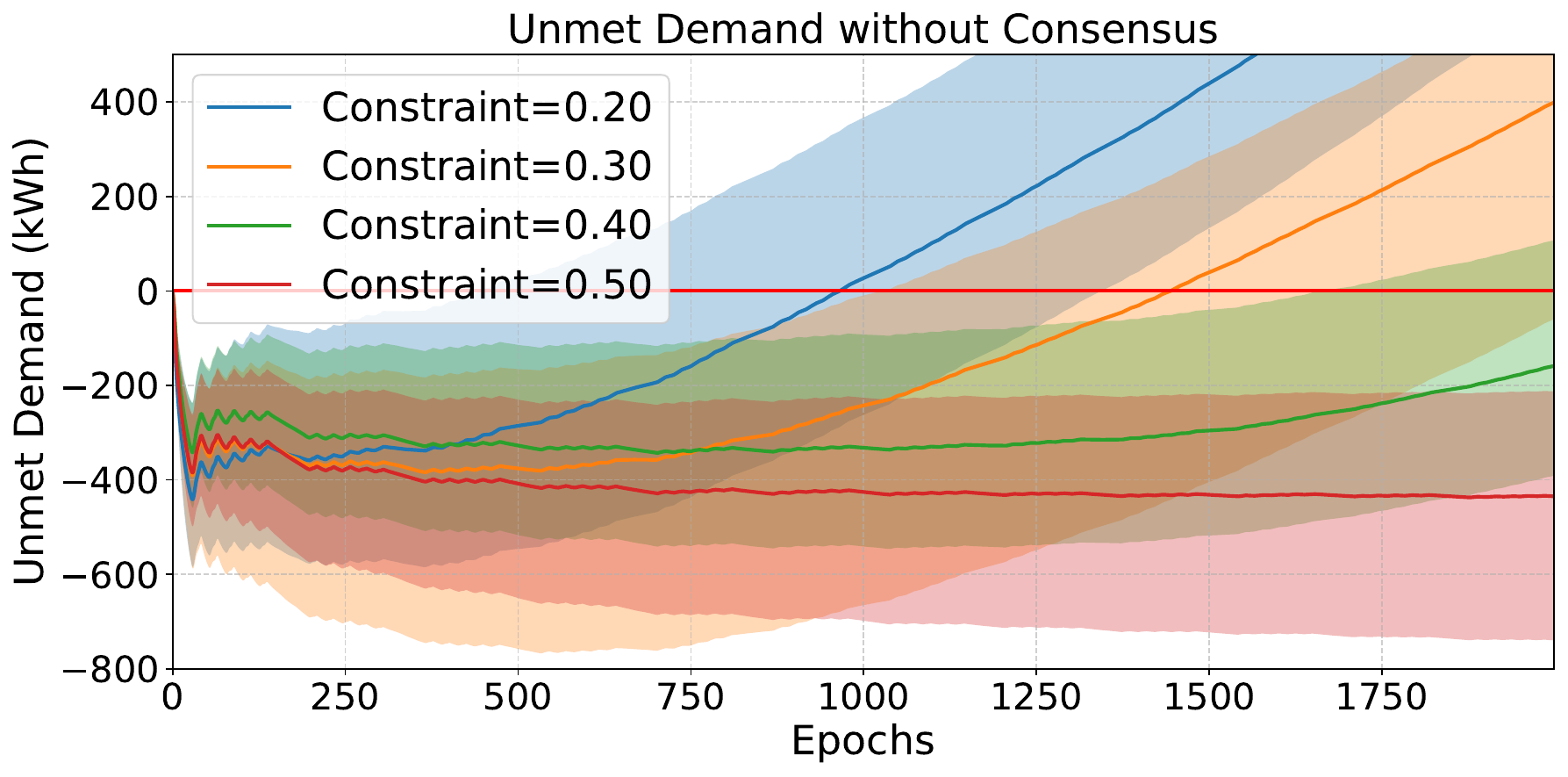}
    \caption{Unmet demand (no consensus)}
    \label{fig:unmet_nocons}
  \end{subfigure}
  \caption{Cost and demand-satisfaction trajectories. The dashed horizontal line in (\subref{fig:unmet_nocons}) marks zero unmet demand.}
  \label{fig:cost_unmet}
\end{figure}

The absence of consensus produces both higher operating cost (Figure~\ref{fig:cost}) and continued growth of deferred demand (Figure~\ref{fig:unmet_nocons}). These complementary views underline the practical value of the lightweight neighbor-to-neighbor communication adopted in Algorithm \ref{alg:Alg1}. Figure~\ref{fig:lambda_cons} shows that exchanging $\lambda^i$ with immediate neighbors causes convergence to the same value, thereby satisfying the global grid-consumption constraint. Without this exchange (Figure~\ref{fig:lambda_no_cons}), the two high-demand agents push their multipliers to the hard cap of~15, signaling that dual ascent has saturated before a feasible primal solution was found.

\begin{remark}[Interpreting Unmet Demand]
\label{rem:unmet_demand}
The cumulative unmet demand $\sum_t r_2^i$ can be positive (demand postponed) or negative (demand met early through proactive battery use). In demand response, \emph{both directions} represent valid load shifting. The key indicator of solution quality is \emph{stability}: a stable cumulative value (whether positive or negative) indicates the system found a feasible equilibrium, while unbounded growth indicates failure. In Figure~\ref{fig:unmet_cons}, the stable negative values for $c \in \{0.3, 0.4\}$ show agents proactively using stored energy when the grid constraint is tight,exactly the intended demand response behavior. Only $c=0.2$ exhibits unbounded positive growth, confirming infeasibility.
\end{remark}

\subsection{Ablation: State Augmentation and Comparison with Baselines}
\label{subsec:baselines}

Training policies with fixed Lagrange multipliers,rather than conditioning on $\lambda$ as input (state augmentation),is equivalent to standard Lagrangian approaches that \citet{StateAug} prove can fail to produce feasible policies. To empirically validate this and demonstrate the necessity of state augmentation, we trained a grid of Independent PPO (IPPO) agents, one for every fixed $(\lambda,\nu)\!\in\![0,15]\times[-20,8]$ (a subset of the full training sweep $\lambda \in [0,15]\times\nu \in [-25,25]$ retained for this feasibility analysis), yielding 414 models. Each model represents a policy trained \emph{without} state augmentation: the multipliers are fixed during training and cannot adapt during execution. Figure~\ref{fig:heatmap} shows the performance for every $(\lambda,\nu)$ pair, measuring (a) mean grid consumption and (b) absolute rate of change of cumulative postponed demand. Only 7 of 414 configurations ($<2\%$) satisfy both the 27\% grid-consumption limit and prevent postponed demand from diverging (Table~\ref{tab:feasible_multipliers}). This extreme sensitivity confirms the theoretical fragility of non-augmented methods. In contrast, our state-augmented approach trains a single policy that generalizes across all $\lambda$ values, with consensus dynamically finding the correct multiplier during execution.

\begin{figure}[t]
  \centering
  \begin{subfigure}[b]{0.48\linewidth}
    \includegraphics[width=\linewidth]{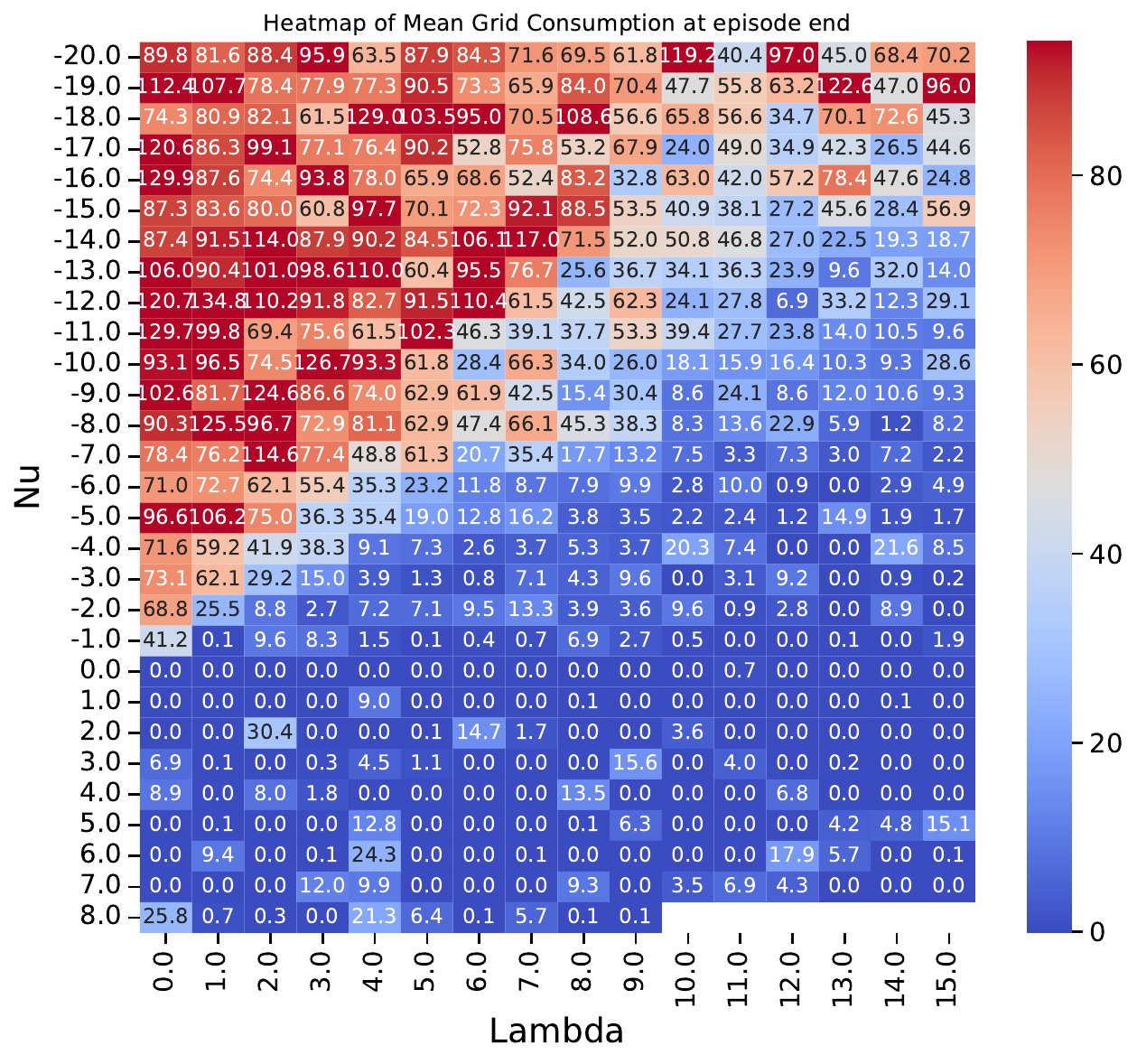}
    \caption{Mean grid consumption}
    \label{fig:grid_heat}
  \end{subfigure}\hfill
  \begin{subfigure}[b]{0.48\linewidth}
    \includegraphics[width=\linewidth]{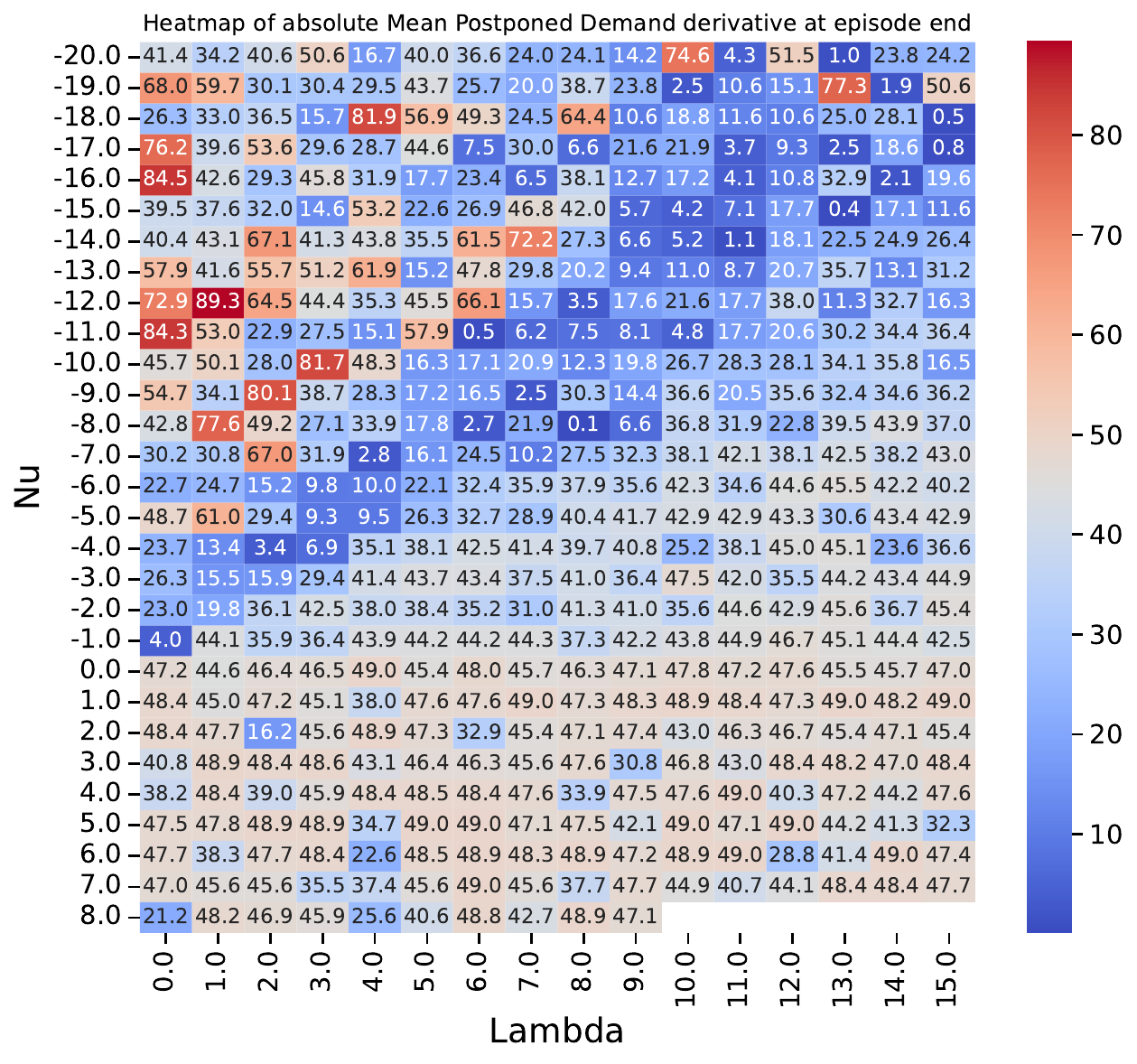}
    \caption{Derivative of postponed demand}
    \label{fig:postpone_heat}
  \end{subfigure}
  \caption{Ablation: 414 IPPO agents trained with fixed $(\lambda,\nu)$ (without state augmentation). Only 7 yield feasible behavior. Blank cells indicate diverging runs.}
  \label{fig:heatmap}
\end{figure}

\begin{table}[t]
  \centering
  \caption{The 7 fixed multiplier pairs (of 414 tested) yielding stable behavior, demonstrating the brittleness of non-augmented methods.}
  \label{tab:feasible_multipliers}
  \begin{tabular}{@{}lccccccc@{}}
    \toprule
    & \multicolumn{7}{c}{\textbf{Configuration}} \\
    \cmidrule(l){2-8}
    $\lambda$ & 6     & 8    & 11    & 13    & 13    & 15    & 15 \\
    $\nu$     & $-11$ & $-8$ & $-14$ & $-15$ & $-20$ & $-17$ & $-18$ \\
    \bottomrule
  \end{tabular}
\end{table}

\paragraph{Comparison with MARL Baselines.}
From the seven feasible pairs we selected $(\lambda^{\star},\nu^{\star})=(8,-8)$, giving baselines their best chance, and trained four multi-agent methods: MAPPO, MADDPG, MASAC, and ISAC. Each baseline ran with these hand-tuned fixed penalty weights, whereas our method adapted multipliers online via consensus. For every algorithm we executed 10 roll-outs and recorded (i) the trajectory closest to satisfying the 27\% grid threshold and (ii) the mean total cost. Figure~\ref{fig:marl_benchmarks} shows that only MAPPO and ISAC keep grid consumption near the limit, while MASAC and MADDPG overshoot. Even these ``successful'' baselines closely match the operating cost of our decentralized multiplier-adaptive method, which achieves 0\% infeasible roll-outs across all seeds. Despite their hand-tuned advantage from exhaustive search, the baselines still violate constraints, highlighting the fundamental limitation of fixed-multiplier approaches.

\begin{figure}[t]
  \centering
  \includegraphics[width=0.8\linewidth]{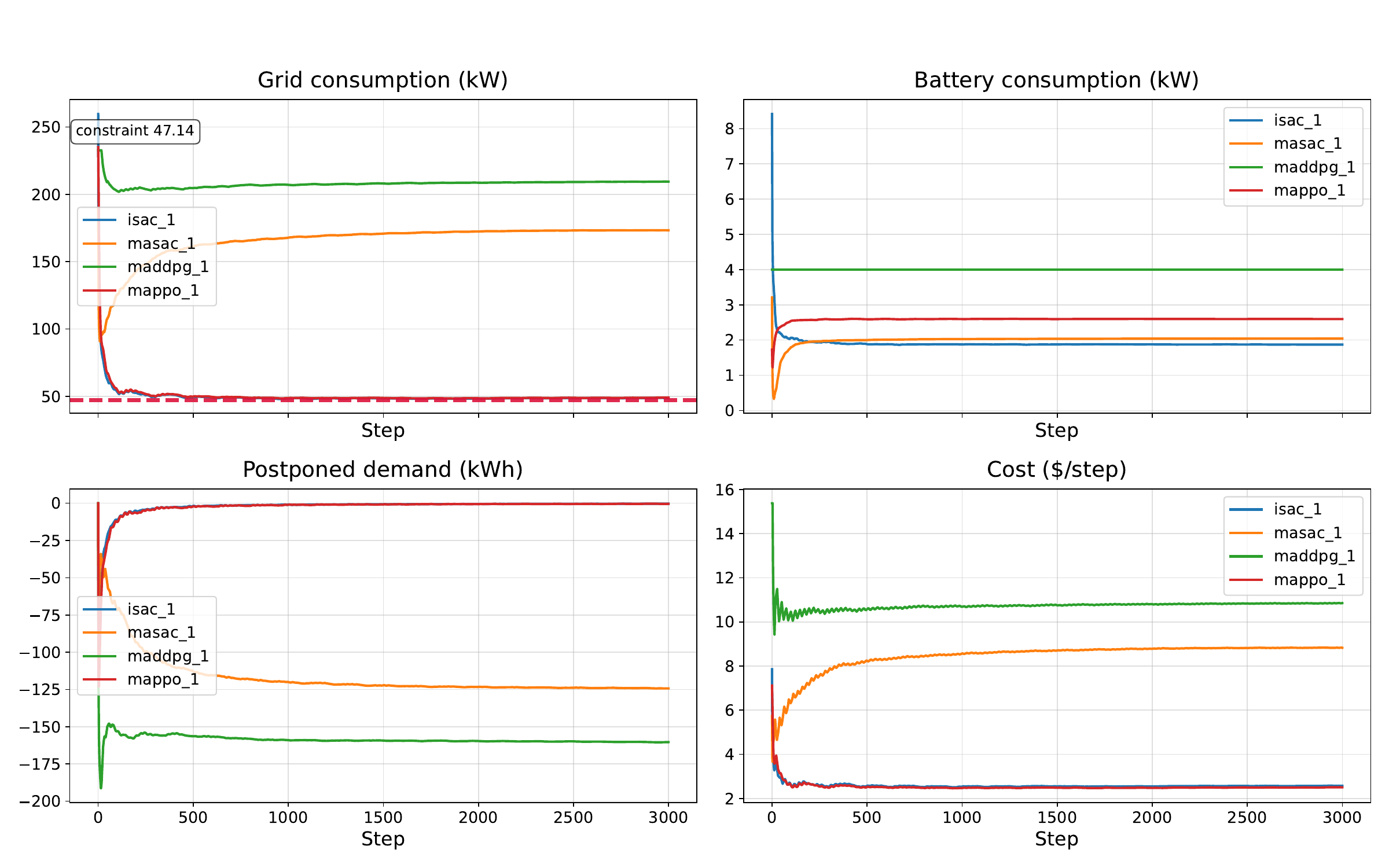}
  \caption{Average performance of state-of-the-art MARL baselines. The grey dashed line marks the grid-consumption constraint (27\,\% of peak demand).}
  \label{fig:marl_benchmarks}
\end{figure}

All four baselines rely on centralized components; MAPPO, MASAC and ISAC use a joint critic, while MADDPG conditions each critic on the full joint action.\footnote{The policies may execute decentralized actions at test time, but training still scales at least quadratically with the number of agents because the critics ingest the joint state--action tuple.} Consequently, their computational and memory costs explode as the population grows, limiting practical use to a few dozen agents. Our method keeps both training and execution fully decentralized, needs only one policy per agent type, and scales linearly in the number of agents (Figure~\ref{fig:alg_scal}), giving it a clear advantage for large systems.

The related work discusses constrained MARL methods such as
Safe Dec-PG~\citep{lu2021decentralized},
MACPO~\citep{gu2021macpo}, and
Scal-MAPPO-L~\citep{zhang2024scalable}. These methods address
different problem settings (coupled dynamics, per-step or
cumulative safety constraints) and are not directly applicable
to our separable-dynamics, average-constraint formulation.
Rather than adapting an external method to our setting, we
compare against a \emph{centralized oracle}: a single global
$\lambda$ updated with perfect knowledge of the aggregate
constraint signal at every timestep. This is the strongest
possible baseline under our formulation, as it represents the
performance achievable with unlimited communication. Any gap
between our distributed method and this oracle is precisely the
cost of decentralization.

\subsection{Centralized Oracle Comparison}
\label{subsec:centralized}
To quantify the cost of decentralization, we compare against a
\emph{centralized oracle} that replaces the distributed consensus
mechanism with a single global multiplier updated from the true
average constraint satisfaction across all agents:
$\lambda_{k+1} = [\lambda_k - \alpha\,(c/N - \frac{1}{N}\sum_{i=1}^N
V_{1,k}^i)]_+$. This oracle has perfect global information at
every timestep, an assumption that is unrealistic in practice but
provides a best-case reference. Both methods use the same
state-augmented policies trained offline; the only difference is how
$\lambda$ is coordinated during execution.

Table~\ref{tab:centralized_vs_distributed} reports results over 10 identical random seeds. The distributed method matches
the centralized oracle in all operational metrics: grid consumption
is very close ($44.89 \pm 0.01$\,kWh vs.\ the 47.14\,kWh threshold),
energy cost differs by $+0.086\% \pm 0.075\%$ (distributed is marginally
more expensive than the oracle, well within the $<\!0.2\%$ regime), and
both achieve negative cumulative unmet demand indicating proactive
demand fulfillment. The constraint is satisfied in 100\% of runs for
both methods.

\begin{table}[t]
\centering
\caption{Centralized oracle vs.\ distributed
consensus ($N\!=\!7$, $c = 0.27$, $T = 3000$, $10$ paired seeds).
Both methods use the same trained policies; the only difference is
the coordination mechanism.}
\label{tab:centralized_vs_distributed}
\begin{tabular}{@{}lccc@{}}
\toprule
\textbf{Metric} & \textbf{Distributed} & \textbf{Centralized} & \textbf{Paired diff.}\\
\midrule
Grid consumption (kWh)
  & $44.887 \pm 0.009$ & $44.888 \pm 0.008$
  & $-0.0013 \pm 0.0038$ \\
Constraint satisfied
  & $100\%$ & $100\%$ & --- \\
Energy cost (\$)
  & $872{,}977.79 \pm 356.63$ & $872{,}228.72 \pm 568.16$
  & $+749.07 \pm 650.02$ \\
Relative cost gap (\%)
  & --- & --- & $+0.086 \pm 0.075$ \\
Unmet demand (kWh)
  & $-157.78 \pm 218.41$ & $-108.41 \pm 216.21$
  & $-49.37 \pm 4.27$ \\
\bottomrule
\end{tabular}
\end{table}

The only notable difference is in the converged dual variable:
$\bar{\lambda} = 1.49 \pm 0.01$ (distributed) versus
$2.11 \pm 0.01$ (centralized). Despite this difference in the
multiplier values, the state-augmented policies produce
operationally equivalent behavior. This demonstrates that the policies are robust to
variation in $\lambda$, consistent with the Lipschitz sensitivity
assumption (Assumption~\ref{ass:lipschitz}): the constraint value
function changes smoothly with the multiplier, so moderate
differences in $\lambda$ do not significantly affect constraint
satisfaction or cost.

The empirical Lipschitz constant, estimated from policy
evaluations over $\lambda \in [0,15]$
(Appendix~\ref{sec:sensitivity_discussion},
Figure~\ref{fig:lipschitz}), is $\hat{L}_V = 8.44$\,kWh per unit
$\lambda$ and $\hat{L}_0 = 39.24$\,\$ per unit $\lambda$. For the 7-agent configuration with $\mathscr{L}=1$, the consensus error term in Corollary~\ref{cor:explicit} gives $L_V\sqrt{N}\,\delta \approx 22.3\,\delta$\,kWh.
The observed constraint margin ($47.14 - 44.89 = 2.25$\,kWh,
Table~\ref{tab:centralized_vs_distributed}) is well within this bound for the empirical consensus errors, showing that the theoretical guarantees of Proposition~\ref{prop:sensitivity} are well grounded.

Figure~\ref{fig:centralized_comparison} shows the execution dynamics. Both methods converge to the same grid-consumption level below the constraint threshold, with the distributed approach achieving slightly more proactive demand fulfillment (more negative cumulative unmet demand) due to its marginally more conservative operation.

\begin{figure}[t]
  \centering \includegraphics[width=0.8\linewidth]{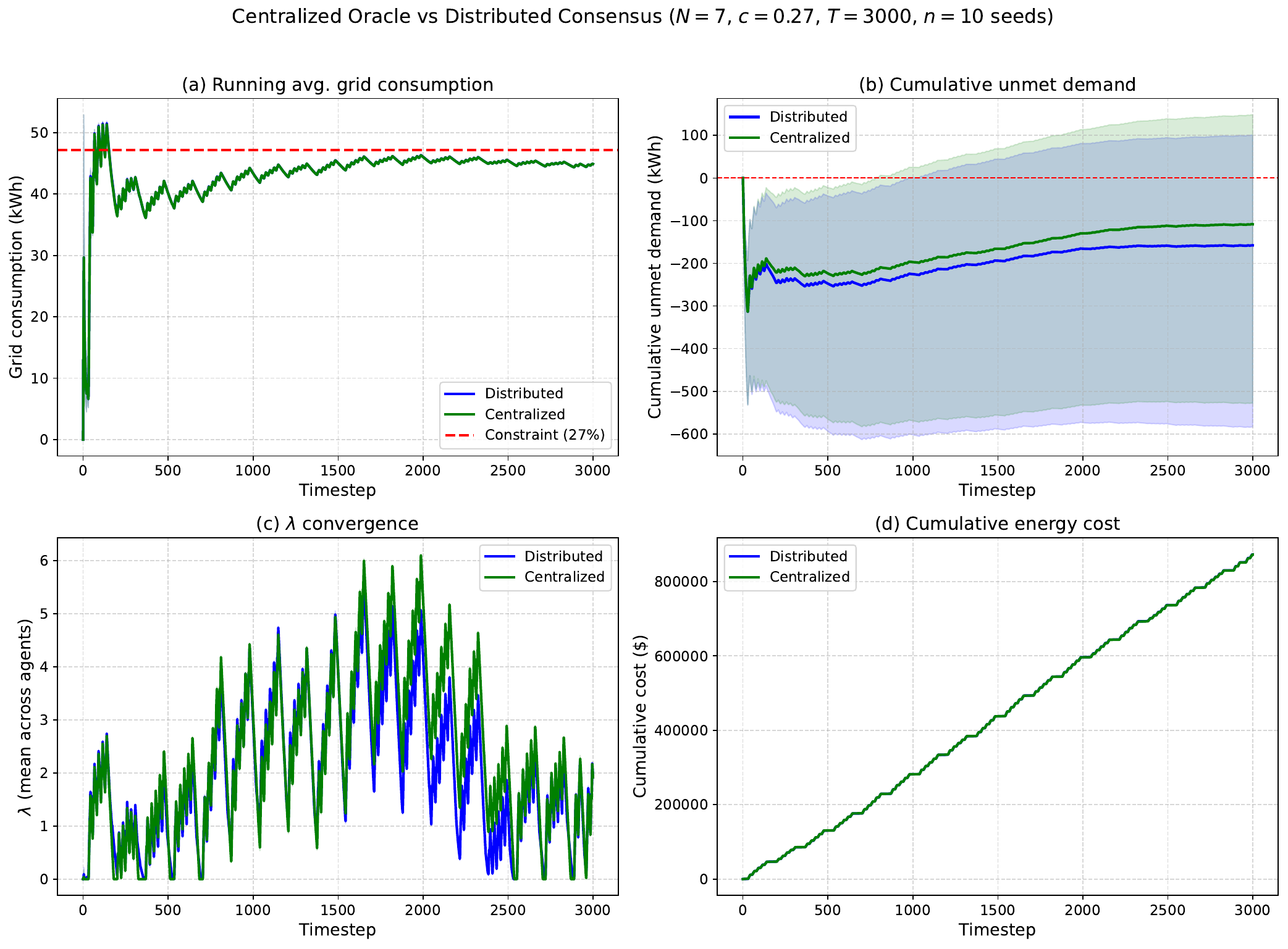}
  \caption{Centralized oracle vs.\ distributed consensus ($N\!=\!7$, $c=0.27$, 10 seeds). Both satisfy the constraint and achieve equivalent cost and demand fulfillment.}
  \label{fig:centralized_comparison}
\end{figure}

\subsection{Scalability Study}
\label{subsec:scalability}

\begin{figure}[t]
  \centering
  \begin{subfigure}[b]{0.48\linewidth}
    \includegraphics[width=\linewidth]{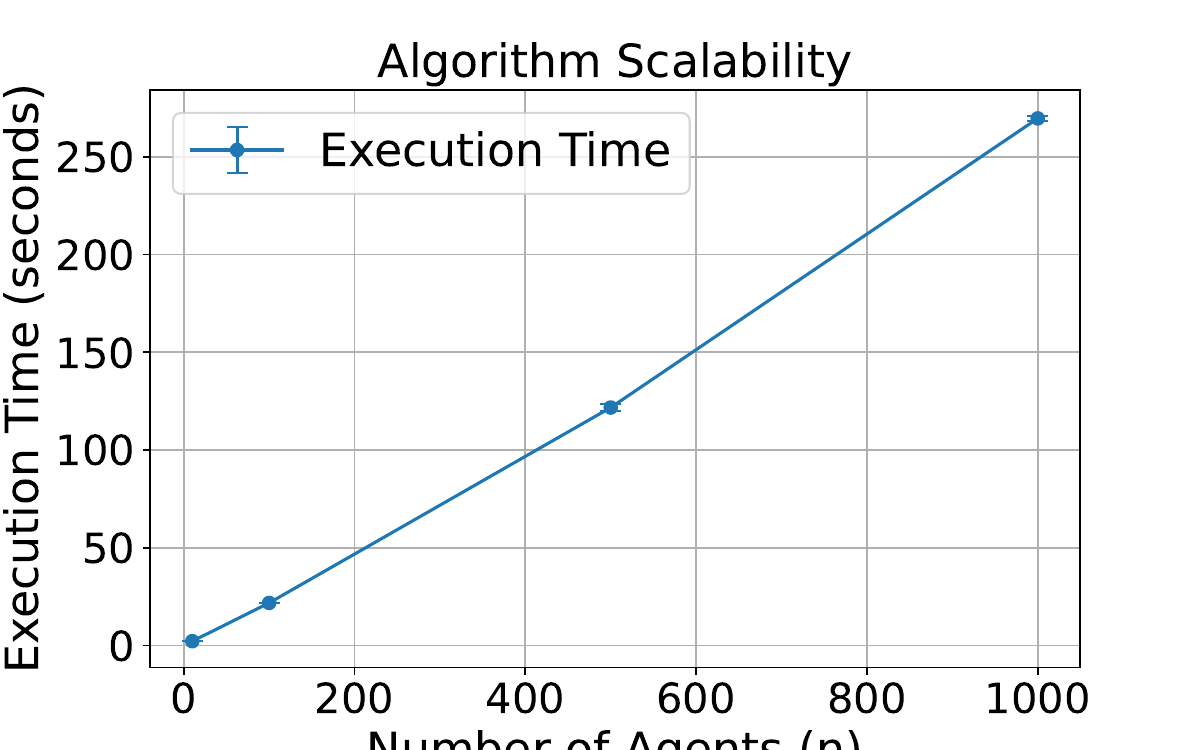}
    \caption{Execution time}
    \label{fig:alg_scal}
  \end{subfigure}\hfill
  \begin{subfigure}[b]{0.48\linewidth}
    \includegraphics[width=\linewidth]{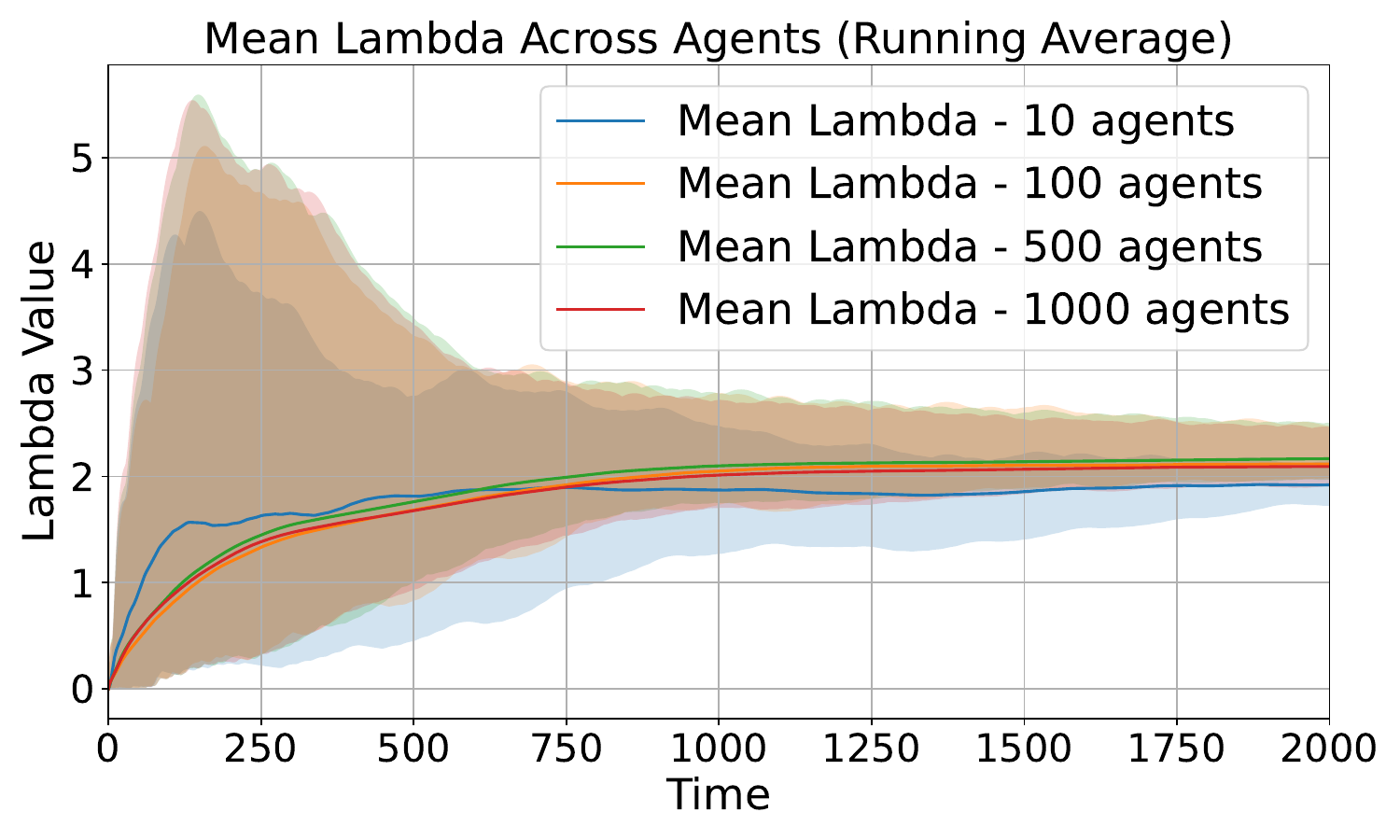}
    \caption{Multiplier convergence}
    \label{fig:lambda_conv}
  \end{subfigure}
  \caption{Scalability of the execution phase. (\subref{fig:alg_scal}) Wall-clock execution time versus agent count. (\subref{fig:lambda_conv}) Running mean of $\lambda^{i}$ for systems of 10, 100, 500, and 1000 agents.}
  \label{fig:scalability_lambda}
\end{figure}

To validate scalability claims, we tested our method on systems with 10, 100, 500, and 1000 agents. Figure~\ref{fig:alg_scal} confirms the \emph{linear} execution-time scaling predicted by our decentralized design, while Figure~\ref{fig:lambda_conv} demonstrates that all agents converge to a common multiplier irrespective of population size. This scalability to 1000 agents far exceeds the capabilities of CTDE-based methods, which are typically limited to a few dozen agents due to their centralized training components.

Because our method trains a single policy per agent type, not per agent, training cost is $O(1)$ in the number of agents~$N$. The same two policies (one per building type, $10^6$ PPO timesteps each) serve all configurations from 7 to 1,000 agents. Training required 26.7 minutes total (13.2 + 13.5\,min) on a 16-core CPU with ${\sim}5$\,GB peak memory.
In contrast, CTDE methods require retraining whenever $N$ changes, as the centralized critic takes the joint state--action space. The combination of $O(1)$ training and linear execution scaling is what enables the orders-of-magnitude
improvement over existing approaches.
 
\section{Conclusion}
We presented a distributed approach to constrained MARL that combines state-augmented policy learning with consensus-based coordination over dual variables. Our experiments demonstrate that the dual-variable consensus mechanism is what makes independently-trained policies collectively feasible: without it, agents resort to degenerate solutions despite individually satisfying their local objectives. The comparison with a centralized oracle in Section~\ref{subsec:centralized} shows that, on the smart-grid configurations evaluated, distributed consensus closes the cost gap to $+0.086\% \pm 0.075\%$ (within the $<\!0.2\%$ regime), while training cost is $O(1)$ in the number of agents and execution scales linearly, reaching 1,000 agents where CTDE methods are limited to tens. We emphasize that this near-equivalence with the centralized oracle is an empirical observation in a specific testbed, not a general optimality claim. The analysis of Appendix~\ref{sec:theoretical_analysis} bounds the consensus error of the global multiplier $\lambda$ and translates this bound into a feasibility margin through Proposition~\ref{prop:sensitivity}, but does not yield a full primal optimality guarantee. The local-constraint multiplier $\nu$ is updated by an analogous primal-dual scheme and is well-behaved in our experiments; however, its convergence is not covered by Theorem~\ref{thm:consensus_bound}, and we treat it as an empirical extension of the global-constraint theory rather than a setting in which the formal guarantees apply.

The separable-dynamics and summable-rewards assumptions are restrictive, and identifying the class of problems for which our framework is appropriate is therefore important. Our method is naturally suited to infrastructure-management settings in which agents have physically decoupled local dynamics and interact only through a shared resource budget. Smart-grid demand response (studied here), distributed electric-vehicle charging, traffic-signal coordination at lightly-coupled intersections, and water- or heating-network management all share this structure: each node manages a local controllable resource, and the global constraint aggregates linearly across nodes. Conversely, settings in which agent dynamics are physically coupled, such as multi-robot manipulation, cooperative navigation in a shared workspace, or pursuit-evasion in a joint state space, violate Assumption~\ref{ass:independence} and lie outside the regime to which our analysis applies. For such problems the consensus error bound of Theorem~\ref{thm:consensus_bound} no longer guarantees that aligned multipliers produce a coherent joint policy, and a different coordination mechanism is required.

The empirical evidence we present is encouraging but comes predominantly from one application domain, and broader validation across the application classes identified above remains a priority for follow-up work. Several further directions are open. First, extending the framework to time-varying constraints would broaden applicability to settings where resource limits fluctuate, such as renewable-energy availability or congestion-dependent capacities. Second, heterogeneous and time-evolving communication graphs would better model realistic infrastructure networks. Third, hybrid approaches that combine our lightweight consensus mechanism with more expressive coordination, for example sparse joint critics on small clusters of strongly coupled agents, could address problems with partially-coupled dynamics and bridge the gap between full separability and the general coupled setting. Fourth, a theoretical treatment of the local-constraint multiplier $\nu$ remains open; establishing convergence guarantees analogous to Theorem~\ref{thm:consensus_bound} for the joint $(\lambda, \nu)$ update is a natural next step.

\newpage
\bibliography{main}
\bibliographystyle{tmlr}

\appendix
\setcounter{figure}{0}
\renewcommand{\thefigure}{\thesection.\arabic{figure}}
\newpage

\renewcommand{\theequation}{\thesection.\arabic{equation}}
\setcounter{equation}{0}
\section{Theoretical Analysis}
\subsection{Convergence of the Consensus Algorithm}
\label{sec:theoretical_analysis}

In this section, we provide a rigorous analysis of the convergence properties of the consensus algorithm employed in our distributed optimization framework. The convergence properties of consensus algorithms over networks are well-studied in the literature~\citep{Olfati2004,Olfati2007,Boyd}. Our analysis follows standard techniques in distributed optimization and consensus algorithms,
as well as properties of graph Laplacians and their spectra~\citep{Chung:1997}.
Specifically, we leverage results from spectral graph theory and matrix analysis to establish the exponential convergence of our algorithm. We examine how the local dual variables $ \lambda^i $ converge to a consensus value, ensuring coordination among agents while satisfying global constraints.

\subsubsection{Consensus Update Rule}

The consensus update for agent $ i $ at iteration $ \ell $ can be written in a standard form for consensus algorithms: 
\begin{align}
\lambda^i_{\ell+1} &= \lambda^i_\ell - \epsilon \left( \lambda^i_\ell - \frac{1}{|\mathcal{N}^i|}\sum_{j \in \mathcal{N}^i} \lambda^j_\ell \right), \notag \\
&= \lambda^i_\ell - \epsilon \left( \frac{1}{|\mathcal{N}^i|}\sum_{j \in \mathcal{N}^i} (\lambda^i_\ell - \lambda^j_\ell) \right), \notag \\
&= \lambda^i_\ell - \epsilon \sum_{j \in \mathcal{N}^i} \frac{1}{|\mathcal{N}^i|} \left( \lambda^i_\ell - \lambda^j_\ell \right).
\label{eq:consensus_update}
\end{align}
where $ \epsilon > 0 $ is the consensus step size. This update rule adjusts each agent's dual variable towards the average of its neighbors' dual variables.

\subsubsection{Matrix Formulation}

We consider a communication network among the agents, given by an undirected graph $ G = (V, E) $, where $ V $ is the set of vertices (agents) and $ E \subset V \times V $ is the set of edges (communication links between agents). The neighborhood of a node $ i \in V $, denoted by $ \mathcal{N}^i $, is the set of nodes that are directly connected to node $ i $ by an edge; i.e., $ \mathcal{N}^i = \{ j \in V \mid (i, j) \in E \} $.

We aim to express the collective updates in matrix form to facilitate the convergence analysis. To do this, we first define the necessary matrices and vectors.

Let $\lambda_\ell = [\lambda^1_\ell, \lambda^2_\ell, \dots, \lambda^N_\ell]^T \in \mathbb{R}^N$ be the global vector of local dual variables at iteration $\ell$, and let $\mathbf{1} \in \mathbb{R}^N$ be a vector of ones. We denote by $A \in \mathbb{R}^{N \times N}$ the adjacency matrix of the graph, where
\begin{equation}
    A(i,j) \;=\;
    \begin{cases}
        1, & \text{if }(i, j) \in E,\\
        0, & \text{otherwise},
    \end{cases}
    \label{eq:adjacency_matrix}
\end{equation}
and by $D \in \mathbb{R}^{N \times N}$ the diagonal degree matrix with $D(i,i) = |\mathcal{N}^i|$. We define the (unnormalized) graph Laplacian as $L = D - A$, and the random-walk normalized Laplacian as $L^{\mathrm{rw}} = D^{-1} L = I - D^{-1} A$.

Then the update of $ \lambda_{\ell+1} $ in vector form is:

\begin{align}
\lambda_{\ell+1} &= \lambda_\ell - \epsilon \left( \lambda_\ell - D^{-1} A \lambda_\ell \right), \notag \\
&= \lambda_\ell - \epsilon L^{\text{rw}} \lambda_\ell, \notag \\
&= P \lambda_\ell,
\label{eq:matrix_update}
\end{align}
where $ P = I - \epsilon L^{\text{rw}} $ is the \emph{Perron matrix}, and $ I $ is the identity matrix. The graph Laplacian $ L^{\text{rw}} $ captures the connectivity of the communication network among agents.

\subsubsection{Assumptions for Convergence}

To analyze the convergence of the consensus algorithm, we make the following assumptions:

\begin{assumption}[Connected Graph]
\label{ass:connected_graph}
The communication graph $ G = (V, E) $ is undirected and connected; that is, there exists a path between any pair of agents.
\end{assumption}

\begin{assumption}[Step Size]
\label{ass:step_size}
The consensus step size $ \epsilon $ satisfies $ 0 < \epsilon < 1 $, ensuring that $ P $ remains a stochastic matrix with non-negative entries.
\end{assumption}

Assumption~\ref{ass:connected_graph} ensures that information can propagate through the network, which is necessary for achieving global consensus. Assumption~\ref{ass:step_size} provides a bound on the step size to guarantee convergence.

\subsubsection{Convergence Analysis}

We analyze the convergence of the consensus algorithm by examining the properties of the Perron matrix $ P $.

\begin{lemma}[Properties of the Perron Matrix]
\label{lemma:perron_properties}
Under Assumptions~\ref{ass:connected_graph} and \ref{ass:step_size}, the Perron matrix $ P = I - \epsilon L^{\text{rw}} $ satisfies the following properties:\\
    \textbf{(a)} $ P $ is row-stochastic and irreducible.\\
    \textbf{(b)} The eigenvalues of $ P $ are $ \nu_i = 1 - \epsilon \Lambda_i $, where $ \Lambda_i $ are the eigenvalues of the Laplacian $ L^{\text{rw}} $.\\
    \textbf{(c)} All eigenvalues of $ P $ satisfy $ |\nu_i| \leq 1 $, the eigenvalue $ \nu_1 = 1 $ has algebraic multiplicity one, and all other eigenvalues satisfy $ |\nu_i| < 1 $.
\end{lemma}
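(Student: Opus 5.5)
The plan is to write $P$ explicitly as $P = (1-\epsilon)I + \epsilon D^{-1}A$. From this form I will read off (a) directly, and I will get (b) and (c) from spectral facts about $D^{-1}A$.

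For part (a), the diagonal entries are $1-\epsilon>0$ by Assumption~\ref{ass:step_size}. The off-diagonal entries are $\epsilon/|\mathcal{N}^i|\ge 0$ when $(i,j)\in E$ and $0$ otherwise. Row sums equal $(1-\epsilon)+\epsilon\sum_j A(i,j)/|\mathcal{N}^i| = 1$, because $D^{-1}A\mathbf{1}=\mathbf{1}$. Hence $P$ is row-stochastic. Connectedness of $G$ (Assumption~\ref{ass:connected_graph}) means that for any $i,j$ there is a path in $G$. Every edge gives a positive entry of $P$, so there is a path in the digraph of $P$ and $P$ is irreducible. Since $P$ also has a positive diagonal, it is in fact primitive, which I will need later.

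Part (b) is immediate: $P = I-\epsilon L^{\mathrm{rw}}$ is a polynomial in $L^{\mathrm{rw}}$, so if $L^{\mathrm{rw}}v=\Lambda v$ then $Pv=(1-\epsilon\Lambda)v$. Conversely, the characteristic polynomials are related by an affine change of variable, so algebraic multiplicities carry over.

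For part (c), the bound $|\nu_i|\le 1$ follows because the spectral radius of a row-stochastic matrix is at most its induced $\infty$-norm, which equals $1$. Since $P$ is nonnegative, irreducible and has positive diagonal, it is primitive. Perron--Frobenius for primitive matrices then gives that $\rho(P)=1$ is a simple eigenvalue, with eigenvector $\mathbf{1}$, and that every other eigenvalue has strictly smaller modulus.

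As a cross-check that avoids invoking primitivity, I can use the similarity $L^{\mathrm{rw}} = D^{-1/2}(I-D^{-1/2}AD^{-1/2})D^{1/2}$. This shows the $\Lambda_i$ are real eigenvalues of the symmetric normalized Laplacian, which lie in $[0,2]$. The eigenvalue $0$ is simple exactly when $G$ is connected. Then $\nu_i=1-\epsilon\Lambda_i\in[1-2\epsilon,1]$, and since $0<\epsilon<1$ we get $1-2\epsilon>-1$. So $|\nu_i|<1$ for all $\Lambda_i>0$, and $\nu_1=1$ is simple.

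The main obstacle is ruling out an eigenvalue at $-1$, for example on bipartite graphs where $D^{-1}A$ has eigenvalue $-1$. The condition $\epsilon<1$, equivalently the positive diagonal, is precisely what excludes this case, and I will state that explicitly.
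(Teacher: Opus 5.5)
Your proposal is correct. Parts (a) and (b) follow the paper's proof essentially verbatim: explicit entries and row sums, irreducibility from connectivity, and the eigenvector correspondence under $P = I-\epsilon L^{\mathrm{rw}}$. Your affine-substitution remark on the characteristic polynomials is a small improvement, since it makes the transfer of algebraic multiplicities explicit.

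The genuine difference is in (c). Your primary argument is Perron--Frobenius for primitive matrices. You use $\rho(P)\le\|P\|_\infty = 1$ for a row-stochastic matrix, and irreducibility plus a positive diagonal gives primitivity, hence an algebraically simple, strictly dominant eigenvalue $1$. The paper instead argues purely spectrally, which is exactly your cross-check: the similarity $L^{\mathrm{sym}} = D^{1/2}L^{\mathrm{rw}}D^{-1/2}$, the real spectrum of $L^{\mathrm{sym}}$ in $[0,2]$, and simplicity of its zero eigenvalue for connected graphs.

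Your route never uses the symmetry of $A$, so it would carry over to strongly connected directed graphs or asymmetric weights. It also makes explicit the primitivity that the paper later invokes in Lemma~\ref{lemma:consensus_value}. The paper's route gives what Perron--Frobenius alone does not: real eigenvalues and diagonalizability of $P$, since $P$ is similar to the symmetric matrix $I-\epsilon L^{\mathrm{sym}}$. Diagonalizability is used when $P = V\Gamma V^{-1}$ in Theorem~\ref{theorem:consensus_convergence}. The paper's route also locates the eigenvalues explicitly, $\nu_i\in[1-2\epsilon,1]$, linking $\rho$ to the normalized-Laplacian spectrum.

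Finally, your explicit observation that $1-2\epsilon>-1$ is what rules out $\nu_i=-1$ (e.g.\ for bipartite graphs). This fills a step the paper leaves implicit when it passes from ``no complex eigenvalues'' to $|\nu_i|<1$.
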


\begin{proof}
\textbf{(a) Row-Stochasticity and Irreducibility:} The elements of $ P $ are given by
\begin{equation*}
P(i,j) = \begin{cases}
1 - \epsilon, & \text{if } i = j, \\
\frac{\epsilon}{|\mathcal{N}^i|}, & \text{if } (i, j) \in E, \\
0, & \text{otherwise}.
\end{cases}
\end{equation*}
For each row $ i $, the sum of the entries is
\begin{align*}
\sum_{j=1}^N P(i,j) &= P(i,i) + \sum_{j \in \mathcal{N}^i} P(i,j) \\
&= (1 - \epsilon) + \sum_{j \in \mathcal{N}^i} \frac{\epsilon}{|\mathcal{N}^i|} \\
&= (1 - \epsilon) + \epsilon \frac{|\mathcal{N}^i|}{|\mathcal{N}^i|} \\
&= (1 - \epsilon) + \epsilon = 1.
\end{align*}
Thus, $ P $ is row-stochastic. Since the graph $ G $ is connected (Assumption~\ref{ass:connected_graph}), and $ P $ is non-negative, it follows that $ P $ is irreducible.

\textbf{(b) Eigenvalues of $ P $:} Let $ \Lambda_i $ be the eigenvalues of $ L^{\text{rw}} $ with corresponding eigenvectors $ v_i $. Then,
\begin{equation*}
L^{\text{rw}} v_i = \Lambda_i v_i.
\end{equation*}
Therefore,
\begin{equation*}
P v_i = (I - \epsilon L^{\text{rw}}) v_i = v_i - \epsilon L^{\text{rw}} v_i = v_i - \epsilon \Lambda_i v_i = (1 - \epsilon \Lambda_i) v_i.
\end{equation*}
Thus, the eigenvalues of $ P $ are $ \nu_i = 1 - \epsilon \Lambda_i $.

\textbf{(c) Eigenvalues within $ [ -1, 1 ] $:} 
Since the random-walk Laplacian $L^{\mathrm{rw}}$ is similar to the symmetric normalized Laplacian 
\[
   L^{\mathrm{sym}} \;=\; D^{-\tfrac12} \,L\,D^{-\tfrac12}
\]
via
\[
   L^{\mathrm{sym}} \;=\; D^{\tfrac12} \,L^{\mathrm{rw}}\, D^{-\tfrac12},
\]
they share the same set of eigenvalues $\{\Lambda_i\}$. 

\medskip

To see this more explicitly, let $\Lambda_i$ and $v_i$ be an eigenvalue--eigenvector pair of $L^{\mathrm{sym}}$, i.e.,
\[
   L^{\mathrm{sym}} \, v_i \;=\; \Lambda_i \, v_i 
   \quad\Longleftrightarrow\quad
   \bigl(I \;-\; D^{-\tfrac12} A\, D^{-\tfrac12}\bigr) \, v_i 
   \;=\; \Lambda_i \, v_i.
\]
Pre-multiplying both sides by $D^{-\tfrac12}$ gives
\[
   \bigl(D^{-\tfrac12} \;-\; D^{-1} A\, D^{-\tfrac12}\bigr)\, v_i 
   \;=\; \Lambda_i \,D^{-\tfrac12} \,v_i
   \quad\Longleftrightarrow\quad
   \bigl(I \;-\; D^{-1} A\bigr)\, D^{-\tfrac12} \,v_i 
   \;=\; \Lambda_i \,D^{-\tfrac12} \,v_i.
\]
Recalling that $L^{\mathrm{rw}} = I - D^{-1}A$, it follows that
\[
   L^{\mathrm{rw}} \, \bigl(D^{-\tfrac12} \,v_i\bigr) 
   \;=\; \Lambda_i \,\bigl(D^{-\tfrac12} \,v_i\bigr).
\]
Hence, if $(\Lambda_i, v_i)$ is an eigenvalue--eigenvector pair of $L^{\mathrm{sym}}$, then the same $\Lambda_i$ and $D^{-\tfrac12} v_i$ form an eigenvalue--eigenvector pair of $L^{\mathrm{rw}}$. Therefore, both matrices share the same eigenvalues.
 We can establish the following facts:
\begin{enumerate}
    \item \textbf{Real symmetry and positive semidefiniteness:} 
    The matrix $L^{\mathrm{sym}}$ is real symmetric (since $L$ is symmetric and $D^{-1/2}$ is diagonal). Then, $L^{\mathrm{sym}}$ is diagonalizable, and its eigenvalues are real. Standard results in spectral graph theory further show $L^{\mathrm{sym}}$ is positive semidefinite, implying its eigenvalues are nonnegative \citep{Chung:1997, Godsilbook}.

    \item \textbf{Eigenvalues in $[0,2]$:} 
    From classical bounds on the spectrum of $L^{\mathrm{sym}}$ (e.g., using the structure of the degree and adjacency matrices), one obtains
    \begin{equation*}
    0 = \Lambda_1 \le \Lambda_2 \le \cdots \le \Lambda_N \le 2 
    \quad\text{\citep{hornJohn,GoodGraphTheory}}.
    \end{equation*}
    The eigenvalues of $L^{\mathrm{rw}}$ also lie in $[0,2]$.
\end{enumerate}

Because $ 0 \leq \Lambda_i \leq 2 $ and $ 0 < \epsilon < 1 $, we have
\begin{equation*}
|\nu_i| = |1 - \epsilon \Lambda_i| \leq 1,
\end{equation*}

Since $ G $ is connected, the multiplicity of the zero eigenvalue of $ L^{\text{rw}} $ is one, so the eigenvalue $ \nu_1 = 1 $ of $ P $ has algebraic multiplicity one. Since there are no complex eigenvalues ($L^{sym}$ is real and symmetric), all other eigenvalues satisfy $ |\nu_i| < 1 $ for $ i \geq 2 $.
Hence, all eigenvalues of $P$ lie in $[-1,1]$.

This ensures that the spectral radius of $ P $ is $ \rho(P) = 1 $, and the convergence of the consensus algorithm is governed by the second-largest eigenvalue in magnitude, which is less than 1.
\end{proof}

\subsection{Global Consensus Error}

We analyze the convergence of the consensus algorithm by first establishing the value to which it converges, and then proving the rate of convergence.

\subsubsection{Consensus Value}

\begin{lemma}[Consensus Value]
\label{lemma:consensus_value}
Under Assumption~\ref{ass:connected_graph}, the consensus algorithm converges to a weighted average of the initial dual variables. Specifically, for any initial vector $ \lambda_0 \in \mathbb{R}^N $,
\begin{equation*}
\lim_{\ell \to \infty} \lambda_\ell = \hat{\lambda} \mathbf{1},
\end{equation*}
where
\begin{equation*}
\hat{\lambda} = \sum_{i=1}^N w^i \lambda_0^i,
\end{equation*}
and the weights $ w^i $ are given by
\begin{equation*}
w^i = \frac{|\mathcal{N}^i|}{\sum_{j=1}^N |\mathcal{N}^j|}.
\end{equation*}
\end{lemma}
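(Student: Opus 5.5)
The plan is to identify a conserved quantity of the iteration $\lambda_{\ell+1} = P\lambda_\ell$ and then use the spectral facts of Lemma~\ref{lemma:perron_properties} to show that $P^\ell$ converges to a rank-one projector. Let $d = D\mathbf{1}$ be the degree vector, so that $d^i = |\mathcal{N}^i|$. The key observation is that $d$ is a left eigenvector of $P$ for eigenvalue $1$. Since $A$ is symmetric,
\begin{equation*}
d^T P = d^T - \epsilon\, \mathbf{1}^T D D^{-1}(D - A) = d^T - \epsilon\,(\mathbf{1}^T D - \mathbf{1}^T A) = d^T - \epsilon\,(d^T - d^T) = d^T .
\end{equation*}
Consequently $d^T\lambda_\ell = d^T\lambda_0$ for every $\ell$. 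Normalizing, the weighted average $\sum_i w^i\lambda^i_\ell$ is invariant along the iteration, with $w = d/(\mathbf{1}^T d)$. Row-stochasticity from part (a) of Lemma~\ref{lemma:perron_properties} gives the right eigenvector: $P\mathbf{1} = \mathbf{1}$.

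Next I need to show that $P^\ell \to \mathbf{1} w^T$. The matrix $P$ is similar to the symmetric matrix
\begin{equation*}
D^{1/2} P D^{-1/2} = I - \epsilon L^{\mathrm{sym}},
\end{equation*}
so it is diagonalizable with real eigenvalues $\nu_i = 1-\epsilon\Lambda_i$. Write the eigen-decomposition as $P = \sum_i \nu_i\, u_i z_i^T$, where the $u_i$ are right eigenvectors, the $z_i$ are left eigenvectors, and $z_i^T u_j = \delta_{ij}$. Because the eigenvalue $1$ is simple (part (c) of Lemma~\ref{lemma:perron_properties}), we may take $u_1 = \mathbf{1}$ and $z_1 = w$; the normalization holds since $w^T\mathbf{1} = 1$. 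All remaining eigenvalues satisfy $|\nu_i|<1$, so
\begin{equation*}
P^\ell = \mathbf{1}w^T + \sum_{i\ge 2}\nu_i^\ell\, u_i z_i^T \;\longrightarrow\; \mathbf{1}w^T .
\end{equation*}
Applying this to $\lambda_0$ yields $\lambda_\ell \to \mathbf{1}(w^T\lambda_0) = \hat\lambda\,\mathbf{1}$, which is the claim. As a byproduct, the rate is geometric in $\rho = \max_{i\ge2}|\nu_i|$, and the same $\rho$ reappears in Theorem~\ref{thm:consensus_bound}.

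The main obstacle is making the claim $|\nu_i|<1$ for $i\ge 2$ airtight. The lower end $\nu_i > -1$ needs $\epsilon\Lambda_i < 2$. This holds because $\epsilon<1$ and $\Lambda_i\le 2$ give $\epsilon\Lambda_i<2$ strictly, with no bipartiteness issue even when $\Lambda_N = 2$. The upper end $\nu_i<1$ needs $\Lambda_i>0$ for $i\ge2$, which follows from connectivity (Assumption~\ref{ass:connected_graph}): the kernel of $L^{\mathrm{sym}}$ is spanned by $D^{1/2}\mathbf{1}$. I would state both checks explicitly rather than lean on the looser wording in Lemma~\ref{lemma:perron_properties}(c). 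The other point to handle cleanly is the non-symmetric case. Because $P$ is not symmetric, the limit is the oblique projector $\mathbf{1}w^T$ rather than the uniform average, and this is exactly why degree weights appear in the statement.
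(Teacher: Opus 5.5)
Your proof is correct, but it reaches the limit by a different route from the paper's.

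\textbf{The paper's route.} It invokes Perron--Frobenius for the primitive nonnegative matrix $P$ (irreducible, with positive diagonal $1-\epsilon$). This gives $P^\ell \to \mathbf{1}\mathbf{w}_1^\top$ immediately. It then identifies $\mathbf{w}_1$ through detailed balance of the random walk $D^{-1}A$.

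\textbf{Your route.} You verify $d^\top P = d^\top$ directly from $A = A^\top$, which also exhibits the conserved quantity $d^\top\lambda_\ell$. You then get convergence from the symmetrization $D^{1/2}PD^{-1/2} = I - \epsilon L^{\mathrm{sym}}$, a biorthogonal eigen-decomposition, and explicit checks that $0 < \epsilon\Lambda_i < 2$ for $i \ge 2$.

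\textbf{What the paper's route buys.} It needs no diagonalizability or eigenvalue localization. Its convergence step would also survive on a strongly connected directed graph, with a limiting weight that is no longer degree-proportional. However, its weight identification is somewhat implicit. It presupposes that the stationary vector satisfies detailed balance, and it tacitly transfers stationarity from $D^{-1}A$ to $P = (1-\epsilon)I + \epsilon D^{-1}A$. Your computation works on $P$ itself.

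\textbf{What your route buys.} It relies on the undirected structure, but in return:
\begin{itemize}
\item it makes explicit the diagonalizability and real spectrum that Theorem~\ref{theorem:consensus_convergence} uses;
\item it gives the geometric rate $\rho = \max_{i\ge 2}|\nu_i|$ for free, effectively merging this lemma with that theorem;
\item it closes the loose step in Lemma~\ref{lemma:perron_properties}(c), which concludes only $|\nu_i| \le 1$ and never explicitly excludes $\nu_i = -1$;
\item it extends to step sizes $0 < \epsilon < 2/\Lambda_N$, where $P$ may have negative diagonal entries and Perron--Frobenius no longer applies.
\end{itemize}

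As a small aside, both proofs also use Assumption~\ref{ass:step_size}, not only the connectivity assumption cited in the statement.
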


\begin{proof}
By the Perron-Frobenius theorem  \citep{hornJohn}, since $ P $ is a primitive nonnegative matrix, it satisfies 
\begin{equation*}
\lim_{\ell \to \infty} P^\ell = \mathbf{v}_1 \mathbf{w}_1^\top,
\end{equation*}
where $ \mathbf{v}_1 = \mathbf{1} $ is the right eigenvector corresponding to the eigenvalue $ 1 $, and $ \mathbf{w}_1 $ is the unique left eigenvector satisfying $ \mathbf{w}_1^\top P = \mathbf{w}_1^\top $ with $ \mathbf{v}_1^\top \mathbf{w}_1 = 1 $.

The consensus iteration is given by
\begin{equation}
\label{apeq:consIter}
\lambda_\ell = P^\ell \lambda_0.
\end{equation}
Taking the limit as $ \ell \to \infty $,
\begin{equation*}
\lim_{\ell \to \infty} \lambda_\ell = \lim_{\ell \to \infty} P^\ell \lambda_0 = \mathbf{v}_1 \mathbf{w}_1^\top \lambda_0 = \mathbf{1} (\mathbf{w}_1^\top \lambda_0)  = \hat{\lambda} \mathbf{1}.
\end{equation*}
This shows that all agents' dual variables converge to the scalar $ \hat{\lambda} $, which is a weighted average of the initial values.

To explicitly determine $ \mathbf{w}_1 $, consider the transition matrix $ P^{\text{rw}} = D^{-1}A $, associated with the random walk normalized Laplacian $L^{rw}$ where $ A $ is the adjacency matrix and $ D $ is the degree matrix. For an undirected graph, $ P^{\text{rw}} $ satisfies the detailed balance condition \citep{levin2009markov}:
\begin{equation*}
w^i P^{\text{rw}}_{ij} = w^j P^{\text{rw}}_{ji}.
\end{equation*}
Substituting $ P^{rw}(i,j) = \frac{A(i,j)}{|\mathcal{N}^i|} $ and $ P^{rw}(j,i) = \frac{A(j,i)}{|\mathcal{N}^j|} $, and since $ A(i,j) = A(j,i) $ for undirected graphs, we obtain
\begin{align*}
\frac{w^i}{|\mathcal{N}^i|} &= \frac{w^j}{|\mathcal{N}^j|}=c,\\
w^i &= c |\mathcal{N}^i|
\end{align*}
Since  $\sum_{i=1}^N w^i = 1$, this implies that
\begin{equation*}
c = \frac{1}{\sum_{i=1}^N |\mathcal{N}^i|},
\end{equation*}

Therefore, the consensus value is
\begin{equation*}
\hat{\lambda} = \sum_{i=1}^N w^i \lambda_0^i = \frac{\sum_{i=1}^N |\mathcal{N}^i| \lambda_0^i}{\sum_{i=1}^N |\mathcal{N}^i|} ,
\end{equation*}
which is the degree-weighted average of the initial dual variables.
\end{proof}
\subsubsection{Convergence Rate}

We now establish the exponential convergence rate to the consensus value $ \hat{\lambda} $.

\begin{theorem}[Exponential Convergence to Consensus]
\label{theorem:consensus_convergence}
Under Assumptions~\ref{ass:connected_graph} and \ref{ass:step_size}, the consensus algorithm converges exponentially fast to $ \hat{\lambda} \mathbf{1} $. Specifically, for any initial vector $ \lambda_0 \in \mathbb{R}^N $,
\begin{equation}
\| \lambda_\ell - \hat{\lambda} \mathbf{1} \| \leq C \rho^\ell \| \lambda_0 - \hat{\lambda} \mathbf{1} \|,
\label{eq:error_bound}
\end{equation}
where:
\begin{itemize}
    \item $\rho = \max_{i \geq 2} |\nu_i|<1$ where $\nu_i$ are the eigenvalues of P.
    \item $ C = \kappa(V) = \| V \| \| V^{-1} \| $ is the condition number of the eigenvector matrix $ V $.
\end{itemize}
\end{theorem}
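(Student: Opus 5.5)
The plan is to diagonalize $P$ and measure the error in the eigenbasis. By Lemma~\ref{lemma:perron_properties}, $L^{\mathrm{rw}}$ is similar to the symmetric matrix $L^{\mathrm{sym}}$. Hence $P = I-\epsilon L^{\mathrm{rw}}$ is diagonalizable with real eigenvalues:
\[
P = V\,\mathrm{diag}(\nu_1,\dots,\nu_N)\,V^{-1}, \qquad V = D^{-1/2}U,
\]
where $U$ is an orthogonal matrix of eigenvectors of $L^{\mathrm{sym}}$. We order the eigenvalues so that $\nu_1 = 1$ with right eigenvector $v_1=\mathbf{1}$. The rows of $V^{-1}$ are the corresponding left eigenvectors, and the first one is proportional to $\mathbf{w}_1^\top$, the degree-weighted vector from Lemma~\ref{lemma:consensus_value}.

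First we show that $\hat{\lambda}$ is invariant. Since $\mathbf{w}_1^\top P = \mathbf{w}_1^\top$, we have $\mathbf{w}_1^\top\lambda_\ell = \mathbf{w}_1^\top\lambda_0 = \hat\lambda$ for all $\ell$. Also $P\mathbf{1}=\mathbf{1}$, because $P$ is row-stochastic. Setting $e_\ell = \lambda_\ell - \hat\lambda\mathbf{1}$ therefore gives $e_{\ell} = P^\ell e_0$ and $\mathbf{w}_1^\top e_\ell = 0$.

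Next we write $e_0 = V c$ with coefficients $c = V^{-1}e_0$. The first coefficient is $c_1 \propto \mathbf{w}_1^\top e_0 = 0$, so $e_0$ has no component along $\mathbf{1}$. Then
\[
e_\ell = V\,\mathrm{diag}(\nu_i^\ell)\,V^{-1}e_0 = V\,\Pi\,\mathrm{diag}(\nu_i^\ell)\,V^{-1}e_0,
\]
where $\Pi$ is the diagonal projector that zeroes the first coordinate; inserting it changes nothing because $c_1 = 0$. Submultiplicativity of the operator norm, together with $\|\Pi\,\mathrm{diag}(\nu_i^\ell)\|_2 = \max_{i\ge 2}|\nu_i|^\ell = \rho^\ell$, gives
\[
\|e_\ell\| \le \|V\|\,\rho^\ell\,\|V^{-1}\|\,\|e_0\| = \kappa(V)\,\rho^\ell\,\|e_0\|,
\]
which is exactly \eqref{eq:error_bound}.

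The step that needs care is $\rho<1$, which requires $\nu_i \ne -1$ for $i\ge 2$. Lemma~\ref{lemma:perron_properties} gives $\Lambda_i\in[0,2]$, and $\Lambda_i>0$ for $i\ge2$ by connectivity. With $0<\epsilon<1$ this yields $1-\epsilon\Lambda_i \ge 1-2\epsilon > -1$, and $1-\epsilon\Lambda_i<1$ for $i\ge2$, so $|\nu_i|<1$ and $\rho<1$.

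The remaining subtlety is that $P$ is not symmetric, so the constant $C=\kappa(V)$ cannot be reduced to $1$ in the Euclidean norm. We handle this by working in the factorization $V = D^{-1/2}U$, which gives the explicit bound $\kappa(V) \le \sqrt{d_{\max}/d_{\min}}$. Alternatively, the argument could be run in the $D$-weighted norm, where $P$ is self-adjoint and $C=1$, and then converted back to the Euclidean norm.
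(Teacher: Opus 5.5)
Your proof is correct and takes the same route as the paper: diagonalize $P = V\Gamma V^{-1}$, use $\mathbf{w}_1^\top e_0 = 0$ to remove the component along $\mathbf{1}$, and apply submultiplicativity to obtain the factor $\kappa(V)\rho^\ell$. Two of your steps are tidier than the paper's: the diagonal projector $\Pi$ replaces its $V_{\mathrm{red}}\Gamma_{\mathrm{red}}^\ell V_{\mathrm{red}}^{-1}$ step, where $V_{\mathrm{red}}$ is non-square, and your check $\nu_i \ge 1-2\epsilon > -1$ spells out the $\rho<1$ claim that the paper only asserts.
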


\begin{proof}
Define the error vector at iteration $ \ell $ as:
\begin{equation*}
e_\ell = \lambda_\ell - \hat{\lambda} \mathbf{1}.
\end{equation*}
From Lemma~\ref{lemma:consensus_value}, we have $ \lim_{\ell \to \infty} e_\ell = \mathbf{0} $.

The consensus update rule is:
\begin{equation*}
\lambda_{\ell+1} = P \lambda_\ell,
\end{equation*}
which implies:
\begin{equation*}
e_{\ell+1} = P e_\ell.
\end{equation*}
Iterating this, we obtain:
\begin{equation*}
e_\ell = P^\ell e_0.
\end{equation*}

Since $ P $ is diagonalizable, we can express it as:
\begin{equation*}
P = V \Gamma V^{-1},
\end{equation*}
where:
\begin{itemize}
    \item $ V $ is the matrix of right eigenvectors of $ P $.
    \item $ \Gamma = \operatorname{diag}(\nu_1, \nu_2, \dots, \nu_N) $ contains the eigenvalues of $ P $, with $ \nu_i = 1 - \epsilon \Lambda_i $.
\end{itemize}

Substituting into the error expression:
\begin{equation}
\label{apeq:el}
e_\ell = V \Gamma^\ell V^{-1} e_0.
\end{equation}

In the degree-weighted consensus setting, for eigenvalue $1$, the matrix $P$ has $\mathbf{1}$ as its right eigenvector, while its left eigenvector is $\mathbf{w}_1$. By definition,  the initial error is $e_0 = \lambda_0 - \hat{\lambda}\,\mathbf{1}$ (where $\hat{\lambda}$ is the weighted average), we then have

\begin{equation*}
\mathbf{w}_1^\top e_0 
=\sum_{i=1}^N w^i (\lambda_0^i - \hat{\lambda}) =0.
\end{equation*}

Thus, $e_0$ lies in the subspace orthogonal to $\mathbf{w}_1$. Since $e_\ell = P^\ell e_0$ and $\mathbf{w}_1^\top P = \mathbf{w}_1^\top$, it follows that 
\begin{equation*}
\mathbf{w}_1^\top e_\ell = \mathbf{w}_1^\top P^\ell e_0 = \mathbf{w}_1^\top e_0 = 0 \quad
\text{for all } \ell.
\end{equation*}

Hence, the error remains in the subspace orthogonal to $\mathbf{w}_1$ at every iteration, allowing us to exclude the dominant component in the consensus convergence analysis. Thus, 
\begin{equation*}
e_\ell = V_{\text{red}} \Gamma_{\text{red}}^\ell V_{\text{red}}^{-1} e_0,
\end{equation*}
where:
\begin{itemize}
    \item $ V_{\text{red}} $ consists of eigenvectors corresponding to $ \nu_i $ for $ i \geq 2 $.
    \item $ \Gamma_{\text{red}} = \operatorname{diag}(\nu_2, \nu_3, \dots, \nu_N) $.
\end{itemize}

To bound the norm of the error, we apply the sub-multiplicative property of matrix norms:
\begin{equation*}
\| e_\ell \| \leq \| V_{\text{red}} \| \| \Gamma_{\text{red}}^\ell \| \| V_{\text{red}}^{-1} \|\| e_0 \|.
\end{equation*}
Since $ \| \Gamma_{\text{red}}^\ell \|_2 = \rho^\ell $, where $\rho = \max_{i \geq 2} |\nu_i|$, we have:
\begin{equation*}
\| e_\ell \| \leq \| V \| \| V^{-1} \| \rho^\ell \| e_0 \| = C \rho^\ell \| e_0 \|,
\end{equation*}
where $ C = \kappa(V) = \| V \| \| V^{-1} \| $ is the condition number of $ V $.

Since $\rho<1$, the error decays exponentially:
\begin{equation*}
\| e_\ell \| \leq C \rho^\ell \| e_0 \|,
\end{equation*}
confirming that the consensus algorithm converges exponentially fast to $ \hat{\lambda} \mathbf{1} $.
\end{proof}


\subsubsection{Bounding the Global Consensus Error}

We aim to bound the consensus error $ e_{k+1} = \lambda_{k+1} - \hat{\lambda}_{k+1} \mathbf{1} $, where $ \hat{\lambda}_{k+1} $ is the weighted average of $ \lambda_{k+1}^i $.

\begin{lemma}[Consensus Error Recursion]
\label{lemma:error_recursion}
Under Assumptions~\ref{ass:connected_graph} and \ref{ass:step_size}, the magnitude of the consensus error satisfies the recursion
\begin{equation}
\left\|e_{k+1}\right\| \leq \left\|P^\mathscr{L}\right\| \left\|\left( e_k + \alpha \Delta V_{1,k} \right)\right\|,
\label{eq:error_recursion}
\end{equation}
where $ \Delta V_{1,k} = V_{1,k} - \hat{V}_{1,k} \mathbf{1}$ and $ \hat{V}_{1,k} = \frac{\sum_{i=1}^N |\mathcal{N}^i| V_{1,k}^i}{\sum_{i=1}^N |\mathcal{N}^i|} $.
\end{lemma}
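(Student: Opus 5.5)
We will work directly with the stacked iterate of Algorithm~\ref{alg:Alg1} and show that one outer iteration, a gradient step followed by $\mathscr{L}$ consensus rounds, acts on the error vector as multiplication by $P^{\mathscr{L}}$. We write $V_{1,k} = [V_{1,k}^1,\dots,V_{1,k}^N]^\top$. We take $\hat{\lambda}_k = \mathbf{w}_1^\top \lambda_k$ to be the degree-weighted average, with $\mathbf{w}_1 = (w^1,\dots,w^N)^\top$ the left Perron eigenvector from Lemma~\ref{lemma:consensus_value}, and set $e_k = \lambda_k - \hat{\lambda}_k \mathbf{1}$.

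First, assume the projection $[\cdot]_+$ in the gradient step is inactive (see below). The gradient step then reads, in vector form,
\begin{equation*}
\lambda_{k+\frac12} = \lambda_k + \alpha\bigl(V_{1,k} - \tfrac{c}{N}\mathbf{1}\bigr).
\end{equation*}
The $\mathscr{L}$ consensus rounds are $\mathscr{L}$ applications of \eqref{eq:matrix_update}, so $\lambda_{k+1} = P^{\mathscr{L}} \lambda_{k+\frac12}$.

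Next, we track the weighted average. Since $\mathbf{w}_1^\top P = \mathbf{w}_1^\top$ and $\mathbf{w}_1^\top \mathbf{1} = 1$, we get
\begin{equation*}
\hat{\lambda}_{k+1} = \mathbf{w}_1^\top \lambda_{k+\frac12} = \hat{\lambda}_k + \alpha \hat{V}_{1,k} - \alpha\tfrac{c}{N},
\end{equation*}
with $\hat{V}_{1,k} = \mathbf{w}_1^\top V_{1,k}$. This is exactly the degree-weighted mean in the statement.

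We now subtract $\hat{\lambda}_{k+1}\mathbf{1}$ from $\lambda_{k+1}$. Because $P\mathbf{1}=\mathbf{1}$, we have $\hat{\lambda}_{k+1}\mathbf{1} = P^{\mathscr{L}}(\hat{\lambda}_{k+1}\mathbf{1})$, and hence
\begin{equation*}
e_{k+1} = P^{\mathscr{L}}\Bigl(\lambda_k + \alpha V_{1,k} - \alpha\tfrac{c}{N}\mathbf{1} - \hat{\lambda}_k\mathbf{1} - \alpha\hat{V}_{1,k}\mathbf{1} + \alpha\tfrac{c}{N}\mathbf{1}\Bigr) = P^{\mathscr{L}}\bigl(e_k + \alpha\,\Delta V_{1,k}\bigr).
\end{equation*}
The constant $c/N$ cancels because it is common to all agents. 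Sub-multiplicativity of the induced norm then gives \eqref{eq:error_recursion}.

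We also record a sharpening to be used in Theorem~\ref{thm:consensus_bound}. Both $e_k$ and $\Delta V_{1,k}$ satisfy $\mathbf{w}_1^\top(\cdot) = 0$, so $P^{\mathscr{L}}$ may be replaced by its restriction to that invariant subspace. There, Theorem~\ref{theorem:consensus_convergence} gives the factor $C\rho^{\mathscr{L}}$ rather than $\|P^{\mathscr{L}}\|$. Unrolling the recursion afterwards yields the geometric series $\rho^{\mathscr{L}}/(1-\rho^{\mathscr{L}})$.

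The main obstacle is the projection $[\cdot]_+$, because it does not commute with the weighted average. We will first handle it by arguing that asymptotically the multipliers stay strictly positive, which holds whenever the constraint is active at the optimum, so $\lambda^\star>0$. In that case the projection is eventually inactive and the identity above holds exactly for all large $k$, which is all the $\lim_{k\to\infty}$ statement in Theorem~\ref{thm:consensus_bound} requires.

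If a fully general statement is needed, we would fall back on a different decomposition. We would write $\lambda_{k+\frac12} = \lambda_k + \alpha(V_{1,k}-\tfrac{c}{N}\mathbf{1}) + z_k$, where $z_k\ge 0$ is the componentwise projection correction. We would then try to bound the deviation of $z_k$ from its own weighted mean by that of the unprojected step, using the fact that projection onto $\mathbb{R}_+$ is $1$-Lipschitz componentwise. This would give the same recursion possibly up to a constant factor.
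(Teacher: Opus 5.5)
\textbf{Verdict: there is a genuine gap, and it is the projection $[\cdot]_+$.}

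Your treatment of the consensus rounds is correct, and more explicit than the paper's. From $\mathbf w_1^\top P=\mathbf w_1^\top$ and $P\mathbf 1=\mathbf 1$ you get the exact identity $e_{k+1}=P^{\mathscr L}\bigl(\lambda_{k+\frac12}-\hat\lambda_{k+\frac12}\mathbf 1\bigr)$. This identity holds whether or not the projection is active. The only place the projection matters is identifying $\lambda_{k+\frac12}-\hat\lambda_{k+\frac12}\mathbf 1$ with $e_k+\alpha\Delta V_{1,k}$, and your proposal never actually handles that step.

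\textbf{The ``eventually inactive'' argument is not available.}
\begin{itemize}
\item The lemma has no hypothesis $\lambda^\star>0$, and it is asserted for every $k$. The paper unrolls it down to $e_0$, where $\lambda_0^i=0$ and the projection can be active.
\item To know that every $\lambda_k^i$ eventually exceeds $\alpha\,|c/N-V_{1,k}^i|$, you would need all local multipliers to be close to $\lambda^\star$. That requires a small consensus error, which is exactly what this lemma is meant to establish, so the argument is circular.
\item It would also require $\bar\lambda_k\to\lambda^\star$. The paper only assumes this, and it does not hold with the constant step size and single-sample $V_{1,k}^i$ actually used.
\end{itemize}

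\textbf{The fallback does not give the stated inequality.} Splitting off $z_k$ and bounding its dispersion separately yields, via the triangle inequality, only $\|e_{k+\frac12}\|\le 2\|e_k+\alpha\Delta V_{1,k}\|$. A factor $2\rho^{\mathscr L}$ would then break the geometric series in Theorem~\ref{theorem:consensus_error_bound} unless $2\rho^{\mathscr L}<1$.

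\textbf{How the paper handles it, and how to close the gap.} The paper disposes of the projection in one line, appealing to non-expansiveness of $[\cdot]_+$ to get $\|e_{k+\frac12}\|\le\|e_k+\alpha\Delta V_{1,k}\|$ for every $k$. It writes this per agent, where it is not literally true: for three agents on a triangle and $x=(0,3,-3)$, agent 1's deviation from the mean goes from $0$ to $1$.

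The missing idea is not to split $x$ and $z_k$. Instead, apply the dispersion contraction directly to $y=[x]_+$, where $x=\lambda_k-\alpha(\tfrac cN\mathbf 1-V_{1,k})$, in the degree-weighted norm $\|v\|_w^2=\sum_i w^i v_i^2$. Use the identity $\sum_i w^i(y_i-\bar y)^2=\tfrac12\sum_{i,j}w^iw^j(y_i-y_j)^2$ together with $|[a]_+-[b]_+|\le|a-b|$. This gives, for every $k$,
$\|e_{k+\frac12}\|_w\le\|x-(\mathbf w_1^\top x)\mathbf 1\|_w=\|e_k+\alpha\Delta V_{1,k}\|_w$.
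Combined with your identity, this proves the lemma in that norm.

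The choice of norm matters. In the plain Euclidean norm, the deviation from the $w$-weighted mean can grow under $[\cdot]_+$ when degrees are very uneven. For example, take two adjacent hubs with eleven pendant leaves each and $x=(T,-T,0,\dots,0)$: the squared deviation goes from $2T^2$ to $1117T^2/529$.

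As a bonus, $DP=D-\epsilon(D-A)$ is symmetric, so $P$ is self-adjoint in the $w$-inner product. Hence $\|P^{\mathscr L}\|_w$ restricted to $\{\mathbf w_1^\top v=0\}$ equals $\rho^{\mathscr L}$ exactly, and the factor $C$ in your sharpening disappears.
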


\begin{proof}
From the gradient descent step,
\begin{equation*}
\lambda_{k+\frac{1}{2}}^i = \left[\lambda_k^i - \alpha \left( \frac{c}{N} - V_{1,k}^i \right)\right]_+.
\end{equation*}
The weighted average is
\begin{align*}
\hat{\lambda}_{k+\frac{1}{2}} &= \sum_{i=1}^N w^i \lambda_{k+\frac{1}{2}}^i. \\
&= \sum_{i=1}^N w^i \left[\lambda_k^i - \alpha \left( \frac{c}{N} - V_{1,k}^i \right)\right]_+.
\end{align*}
Using the non-expansive property of the projection $[\cdot]_+$, we can write the magnitude of the error after the gradient step as
\begin{align}
\left\|e_{k+\frac{1}{2}}^i\right\| &= \left\|\lambda_{k+\frac{1}{2}}^i - \hat{\lambda}_{k+\frac{1}{2}}\right\|\\
&= \left\|\left[\lambda_k^i - \alpha \left( \frac{c}{N} - V_{1,k}^i \right)\right]_+ - \sum_{i=1}^N w^i \left[\lambda_k^i - \alpha \left( \frac{c}{N} - V_{1,k}^i \right)\right]_+ \right\|,\notag \\
&\leq \left\|\lambda_k^i - \alpha \left( \frac{c}{N} - V_{1,k}^i \right) - \sum_{i=1}^N w^i \left(\lambda_k^i - \alpha \left( \frac{c}{N} - V_{1,k}^i \right)\right)\right\|,\notag \\
&= \left\|\lambda_k^i - \alpha \left( \frac{c}{N} - V_{1,k}^i \right) - \left( \hat{\lambda}_k - \alpha \left( \frac{c}{N} - \hat{V}_{1,k} \right) \right)\right\|, \notag \\
&= \left\|e_k^i + \alpha \left( V_{1,k}^i - \hat{V}_{1,k} \right)\right\|.
\label{eq:error_after_gradient}
\end{align}
After the consensus update \eqref{apeq:consIter},

\begin{equation}
\label{apeq:errorRec}
\left\|e_{k+1}\right\| \leq \left\|P^\mathscr{L}\right\| \left\|e_{k+\frac{1}{2}}\right\|.
\end{equation}

since the consensus step only affects the error term through multiplication by $ P $. Substituting \eqref{eq:error_after_gradient} into \eqref{apeq:errorRec} and letting $V^i_{1,k}-\hat{V}_{1,k}=\Delta V_{1,k}$, we obtain \eqref{eq:error_recursion}.
\end{proof}

\begin{theorem}[Asymptotic Bound on Consensus Error]
\label{theorem:consensus_error_bound}
For the standard assumption of bounded rewards, the constraint functions $ V_{1,k}^i $ are bounded such that $ \| \Delta V_{1,k} \| \leq \sigma $ for some $ \sigma > 0 $. Then, the consensus error satisfies
\begin{equation*}
\lim_{k \to \infty} \| e_{k+1} \| \leq \frac{\rho^\mathscr{L} \alpha \sigma}{1 - \rho^\mathscr{L}}.
\end{equation*}
where $ \rho = 1 - \epsilon \Lambda_2 $ as before.
\end{theorem}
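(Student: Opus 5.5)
The plan is to unroll the recursion of Lemma~\ref{lemma:error_recursion} into a scalar linear inequality and sum the resulting geometric series. Set $a_k = \|e_k\|$.

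\textbf{Step 1: contraction of the consensus block.} The error vectors live in the invariant subspace $\{x : \mathbf{w}_1^\top x = 0\}$. Here $e_{k+\frac12}$ is defined relative to its own weighted average, so $\mathbf{w}_1^\top e_{k+\frac12}=0$. On this subspace Theorem~\ref{theorem:consensus_convergence} gives
\[
\|P^{\mathscr{L}} x\| \le C\rho^{\mathscr{L}}\|x\|.
\]
I would replace the Euclidean norm by the $D$-weighted norm $\|x\|_D^2 = x^\top D x$. In that norm $P$ is self-adjoint, since $DP = D - \epsilon L$ is symmetric, so the constant becomes $C=1$. The restriction of $P^{\mathscr{L}}$ to the subspace then has operator norm exactly $\rho^{\mathscr{L}}$, with $\rho=\max_{i\ge2}|1-\epsilon\Lambda_i|$. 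For $\epsilon$ small this equals $1-\epsilon\Lambda_2$, as stated.

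\textbf{Step 2: the scalar recursion.} Combining Step~1 with Lemma~\ref{lemma:error_recursion} and the triangle inequality gives
\[
a_{k+1} \le \rho^{\mathscr{L}}\bigl(a_k + \alpha\|\Delta V_{1,k}\|\bigr) \le \rho^{\mathscr{L}} a_k + \rho^{\mathscr{L}}\alpha\sigma .
\]
Two points need care here.
\begin{itemize}
\item The paper's hypothesis bounds deviations from the uniform mean, whereas $\Delta V_{1,k}$ uses the degree-weighted mean $\hat V_{1,k}$. Since the weighted mean lies in the convex hull of the entries, this changes only the constant. I would absorb it into $\sigma$, which is how the theorem states the hypothesis.
\item The projection $[\cdot]_+$ is handled componentwise by nonexpansiveness, as in the lemma. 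Strictly, that lemma bounds only the pre-averaging difference, so I would note that subtracting the weighted average is itself a projection onto the subspace. That projection is nonexpansive in the $D$-norm, so the bound survives.
\end{itemize}

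\textbf{Step 3: unrolling.} Induction gives
\[
a_k \le \rho^{\mathscr{L}k} a_0 + \alpha\sigma\sum_{j=1}^{k}\rho^{\mathscr{L}j}.
\]
Because $\rho<1$ by Lemma~\ref{lemma:perron_properties}(c), the first term vanishes, and the geometric sum tends to $\rho^{\mathscr{L}}/(1-\rho^{\mathscr{L}})$. Taking $\limsup_{k\to\infty}$, written as $\lim$ in the statement, yields the bound. This also proves Theorem~\ref{thm:consensus_bound}.

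\textbf{Main obstacle.} I expect Step~1 to be the hardest part: obtaining the clean factor $\rho^{\mathscr{L}}$ with no condition-number constant $C$, and making the norms in Lemma~\ref{lemma:error_recursion} consistent. My first attempt is the $D$-weighted inner product, under which $P$ is self-adjoint and $\mathbf{w}_1$-orthogonality coincides with $D$-orthogonality to $\mathbf{1}$. If that fails, I would carry $C$ through Steps~2–3 to get $C\rho^{\mathscr{L}}\alpha\sigma/(1-C\rho^{\mathscr{L}})$ for $C\rho^{\mathscr{L}}<1$, and state the result with norms equivalent up to $\sqrt{d_{\max}/d_{\min}}$.
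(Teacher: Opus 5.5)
Your proof is correct and has the same skeleton as the paper's: the recursion of Lemma~\ref{lemma:error_recursion}, a per-iteration contraction by $\rho^{\mathscr{L}}$, then a geometric series. Working in the $D$-weighted norm is a genuine difference, though, and it is what makes the two delicate steps actually hold.

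The paper stays in the Euclidean norm and asserts $\|P^{\mathscr{L}}\|=\rho^{\mathscr{L}}$ on the invariant subspace. It argues that $C=\kappa(V)$ drops out because $e_k$ has no component along the $\nu_1$ direction. Removing that direction does not make the restricted operator Euclidean-normal, and on non-regular graphs the claim fails. For the 4-node path with $\epsilon=1/2$ one has $\rho=3/4$, yet $x=(1,1,-1,-1)^\top$ satisfies $\mathbf{w}_1^\top x=0$ and $\|Px\|_2=\sqrt{5/8}\,\|x\|_2>\rho\,\|x\|_2$.

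Likewise, the paper's lemma handles $[\cdot]_+$ componentwise. The inequality $\bigl|[x_i]_+-\sum_j w^j[x_j]_+\bigr|\le\bigl|x_i-\sum_j w^j x_j\bigr|$ is false in general: take $x=(0,10,-10)$ with uniform weights and $i=1$. Your vector-level fix is right, but write out the one line it rests on. $\Pi=I-\mathbf{1}\mathbf{w}_1^\top$ is the $D$-orthogonal projection and annihilates the constant vector $[\hat u]_+\mathbf{1}$. Hence $\|\Pi[u]_+\|_D=\|\Pi([u]_+-[\hat u\mathbf{1}]_+)\|_D\le\|u-\hat u\mathbf{1}\|_D=\|e_k+\alpha\Delta V_{1,k}\|_D$. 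Nonexpansiveness of $\Pi$ alone does not give this; the insertion of the constant vector is the step that matters.

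Two smaller points follow in the same norm. The weighted mean is the $D$-closest constant vector, so a bound on deviations from the uniform mean transfers with no loss at all. Also, $\max_{i\ge2}|1-\epsilon\Lambda_i|=1-\epsilon\Lambda_2$ as soon as all $\nu_i\ge 0$, which is guaranteed for $\epsilon\le 1/2$.

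As for what each route buys: the paper's aims at the plain Euclidean norm with no graph-dependent constant, but does not justify it. Yours proves the stated bound exactly in $\|\cdot\|_D$, with $\sigma$ also measured in $\|\cdot\|_D$. In the Euclidean norm it holds only up to the factor $\sqrt{d_{\max}/d_{\min}}$, which equals $1$ on regular graphs such as the ring topology.
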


\begin{proof}
 Using Lemma~\ref{lemma:error_recursion} and Theorem \ref{theorem:consensus_convergence}, we have
\begin{align}
\| e_{k+1} \| &\leq \| P^\mathscr{L} \| \left( \| e_k \| + \alpha \| \Delta V_{1,k} \| \right) \notag \\
&\leq \rho^\mathscr{L} \| e_k \| + \rho^\mathscr{L} \alpha \sigma,
\label{eq:error_bound_step}
\end{align}
since $ \| P^\mathscr{L} \| = \rho^\mathscr{L} $ in the subspace orthogonal to $ \mathbf{1} $.

Unrolling the recursion:
\begin{align*}
\| e_{k+1} \| &\leq \rho^\mathscr{L} \| e_k \| + \rho \alpha \sigma \\
&\leq \rho^{2\mathscr{L}} \| e_{k-1} \| + \rho^{2\mathscr{L}} \alpha \sigma + \rho^\mathscr{L} \alpha \sigma \\
&\leq \dots \\
&\leq \rho^{\mathscr{L}(k+1)} \| e_0 \| + \rho^\mathscr{L} \alpha \sigma \sum_{t=0}^k \rho^{t\mathscr{L}} \\
&= \rho^{\mathscr{L}(k+1)} \| e_0 \| + \rho^\mathscr{L} \alpha \sigma \left( \frac{1 - \rho^{\mathscr{L}(k+1)}}{1 - \rho^\mathscr{L}} \right).
\end{align*}
Taking the limit as $ k \to \infty $, we obtain
\begin{equation*}
\lim_{k \to \infty} \| e_{k+1} \| \leq \frac{\rho^\mathscr{L} \alpha \sigma}{1 - \rho^\mathscr{L}}.
\end{equation*}

since $ \| P^\mathscr{L} \| = \rho^\mathscr{L} $ in the subspace
orthogonal to $ \mathbf{1} $
(the condition number $C = \kappa(V)$ from
Theorem~\ref{theorem:consensus_convergence} does not appear here
because the error vector $e_k$ lies entirely in this subspace
by construction (see the proof of Theorem~\ref{theorem:consensus_convergence}) so the dominant
eigenvalue $\nu_1 = 1$ and its associated eigenvector direction
are absent from the error dynamics).
\end{proof}

\setcounter{equation}{0}
\section{Sensitivity Analysis: From Consensus Error to Feasibility}
\label{sec:sensitivity_proof}

We prove Proposition~\ref{prop:sensitivity} and then discuss the
conditions under which the Lipschitz assumption holds.

\subsection{Proof of Proposition~\ref{prop:sensitivity}}

\begin{proof}
\textbf{Part (a).}
We decompose the left-hand side of \eqref{eq:feasibility_gap} using
the triangle inequality:
\begin{align}
  \biggl|\sum_{i=1}^{N} V_1^i\!\bigl(\hat{\pi}^i(\lambda^i)\bigr)
         &- \sum_{i=1}^{N}
            V_1^i\!\bigl(\pi^i_\star(\bar{\lambda})\bigr)\biggr|
  \notag\\
  &\le
    \underbrace{%
      \sum_{i=1}^{N}
        \bigl|V_1^i\!\bigl(\hat{\pi}^i(\lambda^i)\bigr)
              - V_1^i\!\bigl(\pi^i_\star(\lambda^i)\bigr)\bigr|
    }_{(\mathrm{I})}
  + \underbrace{%
      \sum_{i=1}^{N}
        \bigl|V_1^i\!\bigl(\pi^i_\star(\lambda^i)\bigr)
              - V_1^i\!\bigl(\pi^i_\star(\bar{\lambda})\bigr)\bigr|
    }_{(\mathrm{II})}.
  \label{eq:decomp}
\end{align}
For term (I), Assumption~\ref{ass:approx} yields
$(\mathrm{I}) \le N\,\varepsilon_{\mathrm{approx}}$.
For term (II), Assumption~\ref{ass:lipschitz} gives
\begin{equation}
  (\mathrm{II})
  \;\le\; L_V \sum_{i=1}^{N} |\lambda^i - \bar{\lambda}|
  \;\le\; L_V\,\sqrt{N}\,
          \bigl\|\boldsymbol{\lambda} - \bar{\lambda}\,\mathbf{1}\bigr\|
  \;\le\; L_V\,\sqrt{N}\,\delta,
  \label{eq:CS_step}
\end{equation}
where the second inequality is the Cauchy--Schwarz inequality
($\|\mathbf{x}\|_1 \le \sqrt{N}\,\|\mathbf{x}\|_2$)
and the third follows from the premise
$\|\boldsymbol{\lambda} - \bar{\lambda}\,\mathbf{1}\| \le \delta$.
Combining gives~\eqref{eq:feasibility_gap}.

\medskip\noindent
\textbf{Part (b).}
We write
\begin{align}
  \sum_{i=1}^{N} V_1^i\!\bigl(\hat{\pi}^i(\lambda^i)\bigr)
  &= \sum_{i=1}^{N} V_1^i\!\bigl(\pi^i_\star(\bar{\lambda})\bigr)
     \;+\;
     \biggl[\sum_{i=1}^{N} V_1^i\!\bigl(\hat{\pi}^i(\lambda^i)\bigr)
            - \sum_{i=1}^{N}
              V_1^i\!\bigl(\pi^i_\star(\bar{\lambda})\bigr)\biggr]
  \notag\\
  &\le
     \sum_{i=1}^{N} V_1^i\!\bigl(\pi^i_\star(\bar{\lambda})\bigr)
     + L_V\sqrt{N}\,\delta + N\,\varepsilon_{\mathrm{approx}},
  \label{eq:chain1}
\end{align}
where the inequality applies the upper bound from part~(a).
Next, using Assumption~\ref{ass:lipschitz} again,
\begin{equation}
  \sum_{i=1}^{N}
    V_1^i\!\bigl(\pi^i_\star(\bar{\lambda})\bigr)
  \;\le\;
  \sum_{i=1}^{N}
    V_1^i\!\bigl(\pi^i_\star(\lambda^\star)\bigr)
  + L_V\,N\,|\bar{\lambda} - \lambda^\star|
  \;\le\;
  c + L_V\,N\,|\bar{\lambda} - \lambda^\star|,
  \label{eq:chain2}
\end{equation}
where the final inequality uses the assumed centralized feasibility
$\sum_{i} V_1^i(\pi^i_\star(\lambda^\star)) \le c$.
Substituting \eqref{eq:chain2} into \eqref{eq:chain1}
yields~\eqref{eq:feasibility_absolute}.

\medskip\noindent
\textbf{Part (c).}
The argument is identical to parts~(a)--(b), replacing
$V_1^i$ by $V_0^i$ and the Lipschitz constant $L_V$ by $L_0$:
\begin{align}
  \biggl|\sum_{i=1}^{N} V_0^i\!\bigl(\hat{\pi}^i(\lambda^i)\bigr)
         - \sum_{i=1}^{N}
           V_0^i\!\bigl(\pi^i_\star(\lambda^\star)\bigr)\biggr|
  &\le
  \underbrace{%
    L_0\,\sqrt{N}\,\delta
  }_{\substack{\text{consensus}\\\text{error}}}
  \;+\;
  \underbrace{%
    L_0\,N\,|\bar{\lambda} - \lambda^\star|
  }_{\substack{\text{dual}\\\text{convergence}}}
  \;+\;
  \underbrace{%
    N\,\varepsilon_{\mathrm{approx}}
  }_{\substack{\text{function}\\\text{approx.}}},
  \notag
\end{align}
which gives~\eqref{eq:optimality_gap}.
\end{proof}

\subsection{Discussion of Assumptions}
\label{sec:sensitivity_discussion}

\begin{remark}[When Does Assumption~\ref{ass:lipschitz} Hold?]
\label{rem:lipschitz_discussion}
The Lipschitz condition \eqref{eq:lipschitz_V1} captures how
sensitively each agent's constraint-relevant behavior changes
with the dual variable. We provide three complementary justifications.

\textbf{(i) State-augmented neural policies.}
In our framework, $\lambda$ enters as an input to the neural
network policy $\pi^i_\theta(a \mid s, \lambda)$.
Standard neural networks with bounded weights are Lipschitz
functions of their inputs
\citep{szegedy2014intriguing, miyato2018spectral}.
Standard sensitivity results for MDPs
\citep{kearns2002near, kakade2002approximately} then imply that
$V_1^i(\pi^i_\theta(\cdot, \lambda))$ inherits Lipschitz
continuity in $\lambda$, with a constant determined by the
network's spectral norm and the effective horizon.
Concretely, if the policy network has Lipschitz constant
$L_\pi$ with respect to $\lambda$, the transition kernel is
$L_P$-Lipschitz in the total-variation sense, and the effective
horizon is $H_{\mathrm{eff}}$, then
$L_V = O(L_\pi L_P H_{\mathrm{eff}}
         (r_{\max} - r_{\min}))$.

\textbf{(ii) Envelope theorem / Danskin's theorem.}
The dual function
$g^i(\lambda) = \max_{\pi^i}
[V_0^i(\pi^i) + \lambda(c/N - V_1^i(\pi^i))]$
is convex in $\lambda$, and by Danskin's theorem its
(sub)derivative is
$c/N - V_1^i(\pi^i_\star(\lambda))$
whenever the maximizer is unique.
Since $g^i$ is convex with bounded (sub)gradients (by the
boundedness of $V_1^i$), its derivative
$c/N - V_1^i(\pi^i_\star(\lambda))$ is a monotone function
of $\lambda$ with bounded range---a strong structural
property supporting Assumption~\ref{ass:lipschitz}.
For the assumption to fail, $V_1^i(\pi^i_\star(\lambda))$
would need to jump discontinuously in $\lambda$, which is
precluded by the smooth parameterization of
state-augmented policies.

\textbf{(iii) Empirical verification.}
In practice, $L_V$ can be estimated by evaluating
$V_1^i(\hat{\pi}^i(\lambda))$ on a grid of $\lambda$ values
and computing the maximum finite difference:
$\hat{L}_V = \max_{i,k}
|V_1^i(\hat{\pi}^i(\lambda_{k+1})) -
 V_1^i(\hat{\pi}^i(\lambda_k))| \,/\,
|\lambda_{k+1} - \lambda_k|$.
Since the policies are already trained over
$\lambda \in [0, 15]$, this requires only additional
roll-outs at selected multiplier values.
Figure~\ref{fig:lipschitz} shows the result for both building types
over $\lambda \in [0,15]$ with $\nu = -10$ fixed. Both curves are
smooth and monotonically decreasing, consistent with the Danskin
argument above. The maximum finite-difference ratios are
$\hat{L}_{V} = 6.15$\,kWh/$\lambda$ (building type~1, at
$\lambda \in [7.5, 8.0]$) and
$\hat{L}_{V} = 8.44$\,kWh/$\lambda$ (building type~5, at
$\lambda \in [8.5, 9.0]$). The higher sensitivity of building
type~5 reflects its double demand. For the cost value function,
$\hat{L}_{0} = 39.24$\,\$/$\lambda$. We use the worst-case
$\hat{L}_V = 8.44$ in all bounds.
\end{remark}

\begin{figure}[t]
  \centering
  \includegraphics[width=\linewidth]{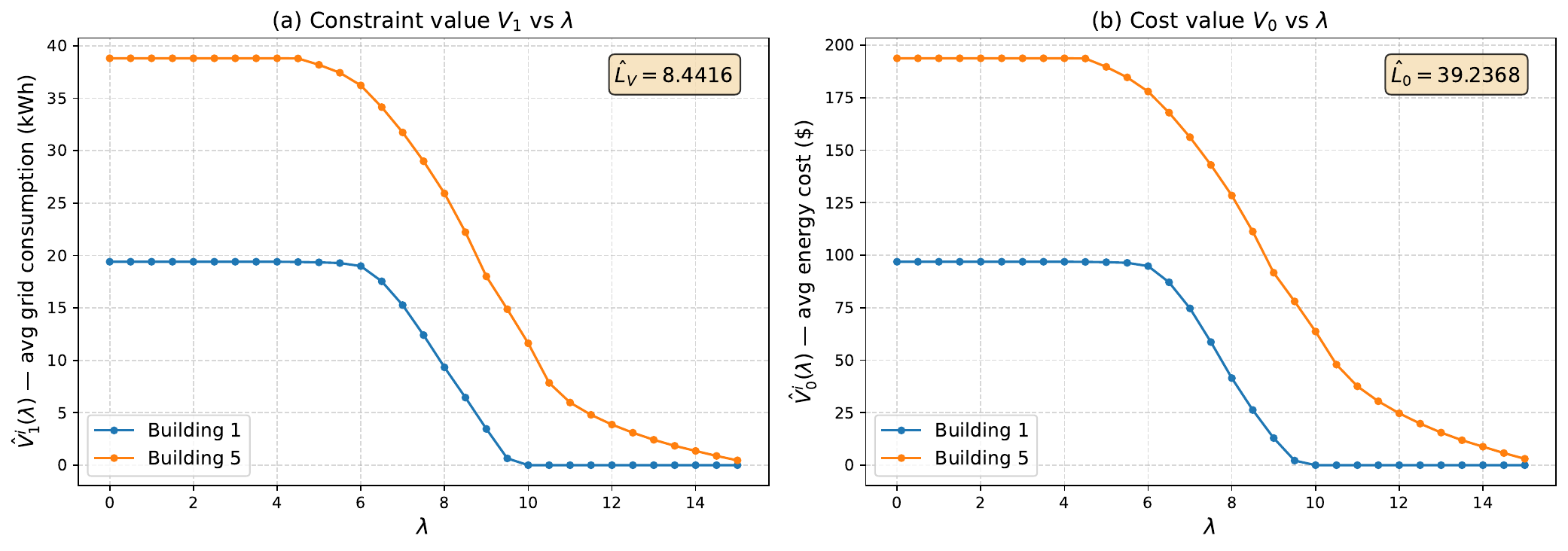}
  \caption{Empirical sensitivity of value functions to the dual
  variable $\lambda$ ($\nu = -10$ fixed, 10 rollouts per point,
  horizon $= 80$). \textbf{(a)}~Constraint value
  $V_1^i(\hat{\pi}^i(\lambda))$ (grid consumption) decreases
  monotonically with $\lambda$, with maximum slope
  $\hat{L}_V = 8.44$\,kWh/$\lambda$. \textbf{(b)}~Cost value
  $V_0^i(\hat{\pi}^i(\lambda))$ shows similar behavior with
  $\hat{L}_0 = 39.24$\,\$/$\lambda$. Both curves are smooth,
  confirming Assumption~\ref{ass:lipschitz}.}
  \label{fig:lipschitz}
\end{figure}

\begin{remark}[Interpretation and Practical Implications]
\label{rem:practical}
The bound in Proposition~\ref{prop:sensitivity} decomposes the
total primal error into three interpretable components:
(i)~\emph{consensus error} ($L_V \sqrt{N}\,\delta$), the cost of
agents disagreeing on the dual variable, controlled by the
communication topology and the number of consensus rounds
$\mathscr{L}$;
(ii)~\emph{dual convergence error}
($L_V\,N\,|\bar{\lambda} - \lambda^\star|$), the distance of the
average multiplier from the centralized optimum, which converges to
zero under standard diminishing step-size conditions; and
(iii)~\emph{function approximation error}
($N\,\varepsilon_{\mathrm{approx}}$), the price of using neural
network policies instead of exact optimizers.

In the idealized limit where $\bar{\lambda} = \lambda^\star$
and $\varepsilon_{\mathrm{approx}} = 0$, the constraint violation
reduces to $L_V \sqrt{N}\,\delta$, which decays exponentially in
$\mathscr{L}$. For well-connected graphs ($\rho \ll 1$),
even $\mathscr{L} = 1$ consensus round per iteration
suffices to make this term negligible---consistent with
the empirical finding in Section~\ref{sec:experiments} that
a single consensus step per time step achieves feasible
solutions.

The $\sqrt{N}$ factor is a consequence of the Cauchy--Schwarz
step and is tight in the worst case (all agents deviating
coherently from the mean). In practice, the deviations tend
to be incoherent across agents, and the empirical scaling
is often milder.
\end{remark}

\begin{remark}[Role of Function Approximation]
\label{rem:exact_vs_approx}
In the tabular (exact optimization) setting with finitely many
deterministic policies, the optimal policy
$\pi^i_\star(\lambda)$ is piecewise constant in $\lambda$,
making $V_1^i(\pi^i_\star(\lambda))$ piecewise constant with
potential jump discontinuities at finitely many breakpoints.
Assumption~\ref{ass:lipschitz} would not hold globally in this
case. However, with state-augmented neural network
policies---where $\lambda$ is a continuous input to a smooth
function approximator---the induced map
$\lambda \mapsto \pi^i_\theta(\cdot \mid \cdot, \lambda)$
is inherently smooth, and the Lipschitz condition is natural.
This is a setting where function approximation is not merely a
computational convenience but a structural regularizer that
enables the sensitivity analysis.
\end{remark}
\end{document}